\documentclass[letterpaper, 10pt, conference]{ieeeconf}

\usepackage{graphics}
\usepackage{epsfig}

\usepackage{cite}
\usepackage[dvipsnames, svgnames]{xcolor}
\usepackage{xurl}
\usepackage{hyperref}
\hypersetup{
    colorlinks,
    linkcolor={red!50!black},
    citecolor={blue!50!black},
    urlcolor={blue!80!black}
}
\usepackage{bm}
\let\labelindent\relax
\usepackage{enumitem}
\usepackage{breqn}
\usepackage{amssymb}
\usepackage{tabularx}
\usepackage{booktabs}
\usepackage{dcolumn}
\newcolumntype{d}[1]{D..{#1}}
\newcolumntype{C}{>{\centering\arraybackslash}X}
\usepackage{soul}
\usepackage[linesnumbered,ruled,vlined]{algorithm2e}

\usepackage{amsthm}

\newcommand{\reals}{\mathbb{R}}

\newcommand{\R}{\reals}

\newcommand{\Ccal}{\mathcal{C}}
\newcommand{\Dcal}{\mathcal{D}}
\newcommand{\Ecal}{\mathcal{E}}
\newcommand{\Fcal}{\mathcal{F}}
\newcommand{\Gcal}{\mathcal{G}}

\newcommand{\Lcal}{\mathcal{L}}

\newcommand{\Scal}{\mathcal{S}}
\newcommand{\Tcal}{\mathcal{T}}
\newcommand{\Ucal}{\mathcal{U}}
\newcommand{\Vcal}{\mathcal{V}}
\newcommand{\Wcal}{\mathcal{W}}
\newcommand{\Xcal}{\mathcal{X}}

\newcommand{\eqn}[1]{\begin{align} #1 \end{align}}

\theoremstyle{plain}
\newtheorem{theorem}{Theorem}
\newtheorem{corollary}{Corollary}
\newtheorem{lemma}{Lemma}
\theoremstyle{definition}
\newtheorem{definition}{Definition}

\theoremstyle{remark}

\definecolor{yellow}{cmyk}{0.0,0.10,0.95,0.0}
\definecolor{pred}{cmyk}{0,0.8,0.70,0.0}
\definecolor{bluedefined}{cmyk}{0.46, 0.10, 0, 0.0}

\usepackage{makecell}
\usepackage[table]{xcolor}

\IEEEoverridecommandlockouts  
\usepackage[english]{babel}
\usepackage{footnotehyper}

\usepackage{hyperref,tablefootnote,footnotehyper}
\usepackage{amsmath,booktabs,mwe,pdflscape}
\hypersetup{colorlinks}

\usepackage{leftindex} 
\usepackage{graphicx} 
\usepackage{amsmath}
\usepackage{amssymb}
\usepackage{amsthm}
\usepackage{amsfonts}
\usepackage{float}
\usepackage{mathrsfs}
\usepackage{booktabs}
\usepackage{array}
\usepackage[nodisplayskipstretch]{setspace}
\makeatletter
\def\BState{\State\hskip-\ALG@thistlm}
\makeatother
\usepackage[capitalize,nameinlink,noabbrev]{cleveref} 
\theoremstyle{plain}

\theoremstyle{plain}
\newtheorem{prob}{Problem}
\theoremstyle{definition}
\newtheorem{assumption}{Assumption}
\newtheorem{remark}{Remark}
\newtheorem{prop}{Proposition}
\newcolumntype{M}[1]{>{\centering\arraybackslash}m{#1}}
\usepackage{pifont}
\usepackage{multirow}
\usepackage{tabularray}
\usepackage{cite}
\usepackage{algpseudocode}

\newcommand{\RNum}[1]{\uppercase\expandafter{\romannumeral #1\relax}}
\usepackage{bigints}

\newcommand{\gatekeeper}{\texttt{gatekeeper}}

\newcommand{\dualgatekeeper}{\texttt{Dual-gatekeeper}}
\algrenewcommand\textproc{}

\usepackage[font=small,skip=10pt]{caption}

\algrenewcommand\algorithmicrequire{\textbf{Input:}}
\algrenewcommand\algorithmicensure{\textbf{Output:}}

\hypersetup{
  colorlinks,
  citecolor=teal, 
  linkcolor=teal,
  urlcolor=purple}

\author{Kaleb Ben Naveed$^{1}$, Manveer Singh$^{1}$, Devansh R. Agrawal$^{1}$, and Dimitra Panagou$^{1,2}$%
\thanks{$^{*}$The authors would like to acknowledge the support of the National Science Foundation (NSF) under grant no. 2223845 and grant no. 1942907.}
\thanks{$^{1}$Department of Robotics, University of Michigan, Ann Arbor, MI, 48109 USA. 
{\tt\small \{kbnaveed@umich.edu\}}}
\thanks{$^{2}$Department of Aerospace Engineering, University of Michigan, Ann Arbor, MI, 48109 USA. }}

\title{\LARGE \bf
Trajectory Planning for Safe Dual Control with Active Exploration
} 

\newcommand\footnoteref[1]{\protected@xdef\@thefnmark{\ref{#1}}\@footnotemark}

\begin{document}

\maketitle
\thispagestyle{empty}
\pagestyle{empty}

\begin{abstract}
Planning safe trajectories under model uncertainty is a fundamental challenge. Robust planning ensures safety by considering worst-case realizations, yet ignores uncertainty reduction and leads to overly conservative behavior. Actively reducing uncertainty on-the-fly during a nominal mission defines the dual control problem. Most approaches address this by adding a weighted exploration term to the cost, tuned to trade off the nominal objective and uncertainty reduction, but without formal consideration of when exploration is beneficial. Moreover, safety is enforced in some methods but not in others. We study a budget-constrained dual control problem, where uncertainty is reduced subject to safety and a mission-level cost budget that limits the allowable degradation in task performance due to exploration. In this work, we propose \dualgatekeeper{}, a framework that integrates robust planning with active exploration under formal guarantees of safety and budget feasibility. The key idea is that exploration is pursued only when it provides a verifiable improvement without compromising safety or violating the budget, enabling the system to balance immediate task performance with long-term uncertainty reduction in a principled manner. We provide two implementations of the framework based on different safety mechanisms and demonstrate its performance on quadrotor navigation and autonomous car racing case studies under parametric uncertainty.
\end{abstract}



\section{Introduction}
Planning safe trajectories in the presence of model uncertainty is a fundamental challenge in control and robotics \cite{naveed2025enabling, lopez2019dynamic_robust2, lew2023risk_robust_3, sasfi2023robust}. In many applications, the system must execute a nominal task such as reaching a goal or tracking a reference trajectory \cite{hanover2024autonomous, xue2024learning, al2025lla, Krinner-RSS-24}, while its dynamics depend on uncertain parameters (e.g., aerodynamic drag or tire–road friction). If these uncertainties are not properly accounted for during planning, the resulting trajectories may violate safety constraints.

Robust planning and control methods including tube-based MPC \cite{lopez2019dynamic_robust2, agrawal2024gatekeeper}, sampling-based approaches \cite{blackmore2010probabilistic, lew2023risk_robust_3}, control barrier function (CBF) methods \cite{9482751, 9683085}, and contraction-based approaches \cite{7989693, sasfi2023robust} address this issue by explicitly accounting for model uncertainty when enforcing safety constraints.
However, these approaches actively do not consider how to reduce uncertainty in the unknown model parameters. As a result, the nominal task must be executed under conservative safety margins, which can degrade performance.

A complementary line of work studies the \emph{dual control} problem \cite{feldbaum1961theory}, where the controller must simultaneously perform a task and reduce uncertainty in unknown but learnable model parameters. Existing approaches typically address this objective in one of two ways. Some explicitly encourage exploration by augmenting the control objective with information-seeking or uncertainty-reduction terms, thereby generating trajectories that are informative about unknown parameters \cite{parsi2023dual, Barcelos-RSS-21, luo2024act}. Others rely on passive uncertainty reduction, where parameter estimates are updated only from data collected along nominal task execution, without deliberately selecting informative trajectories \cite{sasfi2023robust_robust_1, Barcelos-RSS-21, hu2024active}. While these methods have shown that uncertainty can be reduced during control, most do not provide a principled mechanism to determine when exploration is actually worthwhile relative to nominal task progress, especially under 
constraints. 

In this work, we study a \emph{budget-constrained variant of the dual control problem}, in which the system must (i) satisfy state and input constraints under bounded model uncertainty, (ii) actively reduce parametric uncertainty through exploration, and (iii) ensure that the total mission cost remains within a prescribed budget, which represents the maximum allowable degradation in task performance a user is willing to tolerate for enhanced model information. This formulation captures practical scenarios in which exploration may improve long-term performance, but only if it can be certified to not compromise safety or incur excessive cost.

We propose a \dualgatekeeper{} framework that integrates safety, active exploration, and budget feasibility at the architectural level. The proposed framework is inspired by the \gatekeeper{} paradigm \cite{agrawal2024gatekeeper}. At each planning cycle, \dualgatekeeper{} first computes a conservative, robust mission trajectory that guarantees safety while executing the task objective. In parallel, a set of informative candidate trajectories is generated to promote the identification of uncertain parameters. Each candidate is assigned a score based on its predicted uncertainty reduction and mission cost. A candidate is committed only if its execution is certified to remain safe and its predicted cost does not exceed the prescribed mission budget; otherwise, the system continues to execute the conservative mission trajectory. As a result, exploration is treated as a verifiable decision rather than an implicit optimization trade-off.

\textcolor{magenta}{\textbf{\textit{Contributions:}}} In this paper, we
\begin{itemize}
    \item introduce a framework for safe dual control that determines when executing an informative trajectory is beneficial relative to the nominal task, rather than incorporating information-seeking behavior as a weighted term in the objective. The framework ensures that informative trajectories are executed only when safety is preserved and the resulting deviation does not incur excessive cost;
    \item demonstrate the modularity of the framework by instantiating it with different safety mechanisms, including tube-based robust MPC \cite{lopez2019dynamic_robust2} and the \gatekeeper{} architecture \cite{agrawal2024gatekeeper}, and evaluate it in two case studies: autonomous car racing and quadrotor navigation.
\end{itemize}

\section{Related Work}
This section reviews prior work on (i) robust safety under model uncertainty, and
(ii) dual control and uncertainty reduction during task execution.

\subsection{Robust Safety Under Model Uncertainty}
Robust planning and control methods address safety under model uncertainty by constructing trajectories or feedback policies that guarantee constraint satisfaction despite uncertain dynamics, disturbances, or model mismatch. Tube-based model predictive control (MPC) methods enforce safety by tightening constraints and maintaining the closed-loop state within a robust tube around a nominal trajectory; examples include dynamic tube MPC and related extensions \cite{lopez2019dynamic_robust2, sasfi2023robust_robust_1, sieber2025computationally, compton2025dynamic}. Sampling-based approaches provide an alternative route to robustness by approximating chance or risk constraints using scenario-based formulations such as particle control or sample average approximation (SAA) \cite{blackmore2010probabilistic, lew2024sample, lew2023risk_robust_3}. Safety can also be enforced using control barrier function (CBF) methods, which impose barrier constraints to guarantee forward invariance of safe sets under uncertainty and disturbances \cite{9482751, 9683085, choi2021robust, knoedler2025safety}. Complementary approaches based on contraction theory provide robustness certificates by establishing incremental stability through contraction metrics that certify the existence of safe feedback policies \cite{7989693, sasfi2023robust}. While these approaches provide strong safety guarantees, they typically treat uncertainty as fixed and focus on ensuring safe task execution, rather than explicitly reasoning about how exploratory actions could reduce epistemic uncertainty to improve downstream performance.

\subsection{Dual Control and Active Uncertainty Reduction}
A complementary perspective is offered by the dual control problem, which explicitly balances exploitation (achieving the control objective) with exploration (actively reducing uncertainty) \cite{feldbaum1961theory}. In many control problems, uncertainty can be divided into two components: \emph{aleatoric uncertainty}, which reflects irreducible disturbances or noise, and \emph{epistemic uncertainty}, which arises from unknown but learnable model parameters. Dual control primarily focuses on the latter by selecting control actions that both accomplish the task and generate informative data that improve the model. A wide range of approaches have been proposed along these lines \cite{lew2022safe, parsi2020active, Kim-RSS-23, davydov2025first, sasfi2023robust_robust_1, Barcelos-RSS-21, hu2024active, hibbard2023safely, prajapat2025safe, luo2024act, parsi2023dual, mesbah2018stochastic, li2022dual, arcari2020dual, knaup2024adaptive, johnson2025implicit, soloperto2020augmenting, Zhang-RSS-25}. Some methods decouple learning and control by first reducing model uncertainty through a dedicated exploration phase and subsequently computing robust control policies based on the learned model \cite{lew2022safe, parsi2020active, Kim-RSS-23, davydov2025first}. Other approaches rely on \emph{passive} uncertainty reduction, where model parameters are updated only from data naturally collected while executing the nominal control policy \cite{sasfi2023robust_robust_1, Barcelos-RSS-21, hu2024active, lopez2019adaptive, xue2024learning}. More recent work considers \emph{active} exploration strategies that deliberately generate informative trajectories while simultaneously pursuing the mission objective \cite{hibbard2023safely, prajapat2025safe, luo2024act, parsi2023dual}.

Another line of work considers hybrid reinforcement learning methods that balance exploration and exploitation through optimism and pessimism, using optimistic components to encourage broader exploration and pessimistic components to improve value estimation and stabilize learning. These methods incorporate the optimism–pessimism tradeoff directly within the actor-critic learning process through coupled actors, critics, or update rules \cite{9932556, tasdighi2024exploring, moskovitz2021tactical}. 

However, in many of these approaches, the exploration--performance tradeoff is embedded directly within the optimization objective or the learning update. As a result, exploration is encouraged implicitly, either through weighted uncertainty-reduction terms or through coupled optimistic--pessimistic updates, rather than being treated as a separate decision. While this can promote exploratory behavior, it does not provide a principled mechanism for determining when exploration should be preferred if it conflicts with safety guarantees or mission-level cost constraints.

\subsection{Positioning of This Work}
This work connects robust safety methods with uncertainty-reducing decision making. Rather than introducing exploration through weighted objective terms, we consider an architecture in which informative trajectories are treated as candidate decisions that must satisfy safety and mission-level cost requirements before execution. The system therefore maintains a conservative mission trajectory that guarantees safe task completion while evaluating informative candidate trajectories that may reduce parametric uncertainty. The proposed \dualgatekeeper{} framework commits to a candidate only if its safety can be certified and its predicted mission cost remains within the allowed budget; otherwise, execution continues along the conservative trajectory.

A recent work \cite{naveed2025formal} studied this problem by generating informative candidate trajectories and comparing them against a conservative baseline trajectory to determine whether they can safely reduce parameter uncertainty without violating a mission-level cost budget. In that work, the framework was formulated using a finite-horizon backup trajectory, assumed to be available over the remaining mission horizon. This work extends that formulation as follows:
\begin{itemize}
    \item We generalize the framework from a finite-horizon formulation, in which a safe backup trajectory must be known over the remaining mission horizon, to an infinite-horizon formulation that no longer requires such full-horizon backup information.
    
    
    \item The modularity of the framework is demonstrated through two safety instantiations. In the first, the framework is implemented using tube MPC, where safety is enforced by maintaining the closed-loop state within a robust tube around a nominal trajectory. In the second, the framework is implemented using a robust \gatekeeper{} architecture resembling filtering-based safety methods that modify candidate trajectories to ensure constraint satisfaction under uncertainty.
    
    \item Two case studies, namely quadrotor navigation and autonomous car racing, demonstrate these instantiations and illustrate how the same architecture can be integrated with different safety mechanisms while enabling uncertainty-reducing exploration.
\end{itemize}

\section{Preliminaries \& Problem Formulation}\label{sec:prob}
\subsection{Notation}
Let $\mathbb{R}$, $\mathbb{R}_{\geq 0}$, and $\mathbb{R}_{> 0}$ denote the reals, non-negative reals, and positive reals, and $\mathbb{S}^n_{++}$ the symmetric positive definite matrices in $\mathbb{R}^{n\times n}$

\subsection{System Model}
\label{sec:system_model}
Consider a class of nonlinear control-affine systems with parametric uncertainty and bounded additive disturbance:
\begin{align}
    \dot{x} &= 
    f_{0}(x) + F(x)\theta_f + \big(g_{0}(x) + G(x)\theta_g
    \big)u+ w(t), \label{eq:system_model}
\end{align}
where $x \in \Xcal \subset \R^n$ is the state, $u \in \Ucal \subset \R^m$ is the control input, $\theta_f \in \Theta_f \subset \R^{p_f}$ is the unknown drift parameter vector contained in a known compact set, and $\theta_g = \begin{bmatrix}\theta_{g,1} & \cdots & \theta_{g,m}\end{bmatrix} \in \Theta_g \subset \R^{p_g \times m}$ is the unknown input parameter matrix contained in a known compact set, with $\theta_{g,j} \in \R^{p_g}$ denoting its $j$-th column. The uncertain parameters are collected into the vector as 
$\theta = \begin{bmatrix} \theta_f^\top & \theta_{g,1}^\top & \cdots & \theta_{g,m}^\top \end{bmatrix}^\top \in \Theta \subset \R^{p_f+mp_g}$.
The functions $f_0: \R^n \to \R^n$ and $g_0: \R^n \to \R^{n \times m}$ are the known nominal drift and input maps, respectively. The functions $F: \R^n \to \R^{n \times p_f}$ and $G: \R^n \to \R^{n \times p_g}$ are the known drift and input regressors, respectively. Furthermore, we denote $f(x,\theta_f) = f_{0}(x) + F(x)\theta_f$ and $g(x,u,\theta_g) = \big(g_{0}(x) + G(x)\theta_g
    \big)u$.
\begin{assumption}
\label{asm:noise_derivatives}
The additive disturbance $w:[t_0,\infty)\to\R^n$ is bounded $\sup_{t \ge t_0} \|w(t)\| = \overline{w}$. 
\end{assumption}
\begin{assumption}
\label{asm:full_state} The full system state $x(t)$ $\forall t \in [t_0, \infty)$ is assumed to be perfectly observed.
\end{assumption}

\subsection{Linear in Parameter form \& Parameter Identification}
\label{sec:lip_form}
Under~\cref{asm:noise_derivatives} and~\cref{asm:full_state},
the system dynamics \eqref{eq:system_model} can be rearranged to obtain a regression model
that is linear in the unknown parameter vector~$\theta$.
Specifically, define the signal
\eqn{
z(t) := \dot{x}(t) - f_0(x(t)) - g_0(x(t))u(t).
}
Substituting \eqref{eq:system_model} yields
\eqn{
z(t) = F(x(t))\theta_f + (G(x(t))\theta_g)u(t) + w(t).
}
Using the representation $
\theta_g =
\begin{bmatrix}
\theta_{g,1} & \cdots & \theta_{g,m}
\end{bmatrix}$,
the input-dependent term can be written as
\eqn{
(G(x)\theta_g)u
=
\sum_{j=1}^{m} u_j G(x)\theta_{g,j}.
}
Define the regressor matrix
\eqn{
\label{eqn:regressor}
\Phi(x,u):=
\big[
F(x)\;\;
u_1G(x)\;\;
\cdots\;\;
u_mG(x)
\big]
\in \R^{n\times(p_f+mp_g)}.
}
Then the dynamics admit the linear-in-parameters (LIP) representation
\eqn{
\label{eq:lip_form_PE}
z(t)=\Phi(x(t),u(t))\theta + w(t).
}
\subsubsection{Persistent Excitation}The LIP model \eqref{eq:lip_form_PE} forms the basis for parameter identification.
However, \eqref{eq:lip_form_PE} depends on the state derivative $\dot x(t)$, which may not
be directly measurable in practice. To remove this dependence, we integrate the dynamics
over a finite time window as shown in \cite{Cohen2023}.
Let $\Delta>0$ denote the length of the integration window. Integrating
\eqref{eq:system_model} over the interval $[t-\Delta,t]$ gives

\begin{equation*}
\refstepcounter{equation}\label{eq:int_full}
\begin{aligned}
x(t)-x(t-&\Delta)
=
\int_{t-\Delta}^{t}
\Big(
f_0(x(s))
+F(x(s))\theta_f
\Big)\,ds \\
&+
\int_{t-\Delta}^{t}
\big(g_0(x(s))+G(x(s))\theta_g\big)u(s)\,ds
\makebox[0pt][l]{\hspace{0.35em}(\theequation)} \\
&+
\int_{t-\Delta}^{t} w(s)\,ds.
\end{aligned}
\end{equation*}
Using the definition of $\Phi(x,u)$, this can be rewritten as
\begin{equation*}
\refstepcounter{equation}\label{eq:int_phi}
\begin{aligned}
x(t)-x(t-\Delta)
={}&
\int_{t-\Delta}^{t}
\big(f_0(x(s))+g_0(x(s))u(s)\big)\,ds \\
&+
\int_{t-\Delta}^{t}
\Phi(x(s),u(s))\,ds \, \theta
\qquad \makebox[0pt][l]{\hspace{1.5em}(\theequation)} \\
&+
\int_{t-\Delta}^{t} w(s)\,ds.
\end{aligned}
\end{equation*}


Define
\eqn{
\begin{aligned}
Y(t):={}&x(t)-x(t-\Delta) \\
&-\int_{t-\Delta}^{t}
\big(f_0(x(s))+g_0(x(s))u(s)\big)\,ds,
\end{aligned}
}
\eqn{
\mathcal{F}(t):=
\int_{t-\Delta}^{t}\Phi(x(s),u(s))\,ds,
}
and
\eqn{
W_\Delta(t):=
\int_{t-\Delta}^{t} w(s)\,ds.
}
Then the dynamics admit the derivative-free regression model
\eqn{
\label{eq:integral_regression}
Y(t)=\mathcal{F}(t)\theta + W_\Delta(t).
}
The regression model \eqref{eq:integral_regression} depends only on the measured
state and input trajectories over the integration window and avoids direct use of
$\dot x(t)$. To ensure identifiability, the regressor in \eqref{eq:integral_regression} must be persistently exciting (PE) \cite{PE_1_narendra1987persistent, Cohen2023}, which is formally defined as follows.

\begin{definition}[Persistent Excitation {\cite[\textit{Def}.~4.3]{Cohen2023}}]
\label{def:PE}
Consider the integral regression model in \eqref{eq:integral_regression}, where
$\mathcal{F}(t)$ is the regressor. The regressor is said to be persistently exciting
if there exist constants $T>0$ and $c>0$ such that for all $t\ge t_0$,
\eqn{
\int_t^{t+T}\mathcal{F}(\tau)^\top \mathcal{F}(\tau)\,d\tau \ge cI.
}
\end{definition}

\subsubsection{Finite Excitation}The PE condition requires the trajectory to be sufficiently informative over every
interval of length $T$. However, verifying PE online is difficult since it depends on
the future trajectory and may require injecting probing signals into the control input.
Instead, we store informative regression data collected along the trajectory. Let
\eqn{
\mathcal{H}_k=\{(Y_j,\mathcal{F}_j)\}_{j=1}^{M_k}
}
denote a history stack containing $M_k$ previously stored regression tuples, where
each tuple $(Y_j,\mathcal{F}_j)$ is generated from \eqref{eq:integral_regression}.
The stored data satisfy
\eqn{
Y_j=\mathcal{F}_j\theta + W_{\Delta,j},
\qquad j=1,\dots,M_k.
}
Using these stored tuples, identifiability can be characterized through a finite
excitation condition.

\begin{definition}[Finite Excitation {\cite[\textit{Def}.~4.5]{Cohen2023}}]
\label{def:FE}
The history stack $\mathcal{H}_k$ is said to satisfy the finite excitation (FE)
condition if there exists a constant $\lambda_{\mathrm{FE}}>0$ such that
\eqn{
\lambda_{\min}\!\left(
\sum_{j=1}^{M_k}\mathcal{F}_j^\top \mathcal{F}_j
\right)
\ge \lambda_{\mathrm{FE}}.
}
\end{definition}

\subsubsection{Set Membership Identification (SMID)}

Under~\cref{asm:noise_derivatives}, the regression relation
\eqref{eq:integral_regression} can be used to refine the parameter uncertainty set
using set-membership identification (SMID) \cite{cohen2023robust}. The key idea of SMID is to maintain a set of parameters that remain consistent with all observed regression data. Let $\Theta_k$ denote the feasible parameter set at time step $k$, defined as the set of all parameters consistent with the regression data collected up to time $k$. Let $\mathcal{H}_k = \{(Y_j, \mathcal{F}_j)\}_{j=1}^{M_k}$ denote the history stack introduced earlier, where each tuple satisfies
\eqn{
Y_j=\mathcal{F}_j\theta + W_{\Delta,j}, \qquad j=1,\dots,M_k .
}
Given bounded disturbance, the set of parameters consistent with the data can be
characterized by linear inequality constraints. Let $\Theta_0=\Theta$ denote the initial uncertainty set. At time step $k$, the
feasible parameter set is updated for $\forall j\in\{1,\dots,M_k\}$ as
\eqn{
\Theta_k=
\left\{
\theta\in\Theta_{k-1}
\;\middle|\;
-\epsilon\mathbf{1}\le
Y_j-\mathcal{F}_j\theta
\le
\epsilon\mathbf{1}
\right\},
}
where $\epsilon>0$ accounts for bounded disturbance and numerical integration
error. The feasible parameter set $\Theta_k$ can be interpreted as the intersection of halfspaces defined by the stored data. In practice, the bounds of $\Theta_k$ along each coordinate direction can be computed by solving linear programs. Let $\theta_i$ denote the $i$-th component of $\theta$. The lower and upper bounds on $\theta_i$ are obtained as
\eqn{
\theta_i^{k,-} =
\arg\min_{\theta}\;\theta_i,  \quad \theta_i^{k,+} =
\arg\max_{\theta}\;\theta_i,
}
subject to
\eqn{
-\epsilon\mathbf{1}\le
Y_j-\mathcal{F}_j\theta
\le
\epsilon\mathbf{1},
\quad
\theta\in\Theta_{k-1}.
} The updated uncertainty set can then be written
as the hyperrectangle
\eqn{
\Theta_k =
[\theta_1^{k,-},\theta_1^{k,+}]
\times
\cdots
\times
[\theta_p^{k,-},\theta_p^{k,+}].
}

The following result guarantees that the true parameter remains inside the
identified set.

\begin{lemma}[{\cite[Lemma~7.3]{cohen2023robust}}]
\label{lem:SMID_consistency}
Suppose~\cref{asm:noise_derivatives} holds and the parameter sets $\{\Theta_k\}$
are generated using the SMID update described above. Then the sets satisfy
\eqn{
\Theta_k \subseteq \Theta_{k-1} \subseteq \Theta_0
}
and the true parameter satisfies
\eqn{
\theta \in \Theta_k, \qquad \forall k\ge 0.
}
\end{lemma}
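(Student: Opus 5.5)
The plan is to prove the two claims separately: nestedness follows directly from how the update is defined, and containment of the true parameter follows by induction on $k$.

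\emph{Nestedness.} By definition, $\Theta_k$ consists of the elements of $\Theta_{k-1}$ that satisfy additional linear inequalities, so $\Theta_k\subseteq\Theta_{k-1}$. The hyperrectangle form is built from coordinate bounds obtained by the linear programs with $\theta\in\Theta_{k-1}$ imposed as a constraint. So if $\Theta_{k-1}$ is itself a hyperrectangle, each interval $[\theta_i^{k,-},\theta_i^{k,+}]$ lies inside the $i$-th interval of $\Theta_{k-1}$, and the product is again contained in $\Theta_{k-1}$. Chaining these inclusions from $k$ down to $0$ gives $\Theta_k\subseteq\Theta_{k-1}\subseteq\cdots\subseteq\Theta_0$.

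\emph{Containment.} The base case is $\theta\in\Theta_0=\Theta$, which holds by the standing assumption that the true parameter lies in the known compact set. For the inductive step, suppose $\theta\in\Theta_{k-1}$. For every stored tuple, the regression identity gives $Y_j-\mathcal{F}_j\theta=W_{\Delta,j}$. By \cref{asm:noise_derivatives},
\[
\|W_{\Delta,j}\|_\infty\le\Big\|\int_{t_j-\Delta}^{t_j}w(s)\,ds\Big\|\le\Delta\overline{w}.
\]
Provided $\epsilon\ge\Delta\overline{w}$ (plus any integration error), the true $\theta$ satisfies every inequality $-\epsilon\mathbf{1}\le Y_j-\mathcal{F}_j\theta\le\epsilon\mathbf{1}$. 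The true parameter is therefore feasible for each linear program, which forces $\theta_i^{k,-}\le\theta_i\le\theta_i^{k,+}$ for every $i$. Hence $\theta$ lies in the hyperrectangle $\Theta_k$, whether $\Theta_k$ is read as the exact polytope or as its bounding box.

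The main obstacle is tying $\epsilon$ to the disturbance bound. The lemma as stated only says that $\epsilon$ "accounts for" the disturbance, so I would state explicitly the condition $\epsilon\ge\Delta\overline{w}$ (with any numerical-integration slack included) and use it as the hypothesis under which the invoked result applies. A secondary point is feasibility of the linear programs: it follows from the inductive hypothesis, since $\theta$ itself is a feasible point, so the minima and maxima exist by compactness.
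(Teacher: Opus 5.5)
Your argument is correct. The paper gives no proof of its own and simply imports the result from the cited reference. Your route (nestedness by construction, and containment by induction using $\|W_{\Delta,j}\|_\infty\le\Delta\overline{w}\le\epsilon$, so that the true parameter is feasible for every coordinate LP) is the standard argument. You are right to state explicitly the condition $\epsilon\ge\Delta\overline{w}$, plus integration slack, which the paper leaves implicit.

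One small caveat concerns the bounding-box reading. There, the base inclusion $\Theta_1\subseteq\Theta_0$ additionally requires $\Theta_0$ itself to be a box, since the paper assumes only that $\Theta$ is compact. Under the exact-intersection reading, nestedness holds with no extra assumption.
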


Lemma~\ref{lem:SMID_consistency} shows that the SMID update generates a nested sequence of parameter sets that always contains the true parameter. As additional informative data are incorporated, the feasible parameter set shrinks monotonically. Consequently, the parametric uncertainty can be quantified by the \emph{width} of the current feasible set $\Theta_k$, defined next.
\begin{definition}[Width of Parameter Set]
\label{def:width}
Let $\Theta_k \subset \mathbb{R}^p$ denote the feasible parameter set at time $k$, and let $\Dcal$ be a finite set of unit directions. For any direction $d \in \Dcal$, the width of $\Theta_k$ along $d$ is defined as
\eqn{
w_d(\Theta_k) =
\sup_{\theta \in \Theta_k} d^\top \theta
-
\inf_{\theta \in \Theta_k} d^\top \theta.
}
This quantity measures the extent of the feasible parameter set along direction $d$. In the scalar case ($p=1$) with $d=1$ and $\Theta_k = [\theta_{\min}, \theta_{\max}]$, the width reduces to
\eqn{
w_d(\Theta_k) = \theta_{\max} - \theta_{\min}.
}
\end{definition}

\subsection{Problem Statement}
We consider a dual control setting in which a robot must complete a prescribed mission under bounded parametric uncertainty and additive disturbances. Safety is enforced through robust feedback policies that guarantee state and input constraint satisfaction for all admissible uncertainty realizations.

A key challenge is that the mission cost under robust safety requirements depends on the size of the uncertainty set: larger uncertainty leads to more conservative behavior and higher cost. Reducing uncertainty during execution, therefore, enables less conservative robust behavior.

In constrast to earlier work and formulations that continuously encourage exploration through weighted mission-exploration objectives (e.g., mission cost plus a weighted exploration term), uncertainty reduction in this work is pursued only when it remains feasible with respect to a prescribed exploration budget. Accordingly, the objective is to complete the mission safely under worst-case uncertainty while allowing uncertainty reduction during execution, subject to safety constraints and an exploration budget that bounds the cumulative additional cost incurred by exploratory actions relative to the nominal mission behavior.

Now we formulate the problem mathematically. We first define a trajectory:

\begin{definition}[Trajectory]
\label{def:traj}
Let $\Tcal=[t_0,t_{f}]\subset\R$. 
Let $\Pi$ denote the set of admissible feedback policies $\pi: \R \times \Xcal \to \Ucal$.
A trajectory induced by a policy $\pi \in \Pi$ is a pair
$p = \big(p_x:\Tcal\to\Xcal,\; p_u:\Tcal\to\Ucal\big)$ such that
\eqn{
\dot p_x(t)
= f\big(p_x(t),\hat\theta_f\big)
+ g\big(p_x(t),\pi(t, p_x(t)),\hat\theta_g\big),
\quad \forall\, t\in\Tcal .
\label{eq:traj_dynamics}
}
\end{definition}

\begin{figure} [t]
  \centering
  \includegraphics[width=1.0\columnwidth]{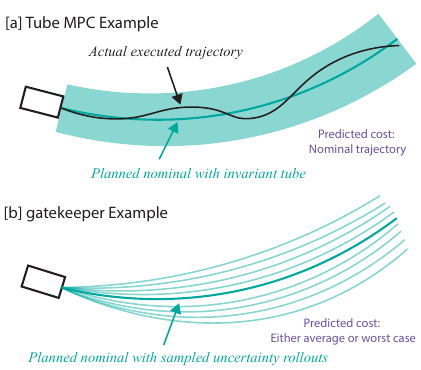}
  \caption{Predicted Cost Example}
  \vspace{-20pt}
  \label{fig:predicted_cost}
\end{figure}

Now consider the system \eqref{eq:system_model} with parametric uncertainty set $\Theta$
and bounded disturbances. We define a robust feedback policy as follows.

\begin{definition}[Robust feedback policy]
\label{def:robust_policy}
A feedback law $\pi^{\mathrm{rob}} : \R \times \Xcal \to \Ucal$ is called a
\emph{robust feedback policy} if, when applied to the system
\eqref{eq:system_model}, the resulting closed-loop trajectory $x(t)$ satisfies
\eqn{
\dot x(t)
=
f(x(t),\theta_f)
+
g(x(t),\theta_g)\,\pi^{\mathrm{rob}}(t,x(t))
+
w(t),
}
and for all admissible uncertainties and disturbances
\eqn{
x(t)\in\Scal,\quad
\pi^{\mathrm{rob}}(t,x(t))\in\Ucal,
\qquad \forall t \ge t_0,
}
for every $\theta_f\in\Theta_f$, $\theta_g\in\Theta_g$, and $w(t)\in\Wcal$.
\end{definition}
\begin{figure*}[t]
  \centering
  \includegraphics[width=2.00\columnwidth]{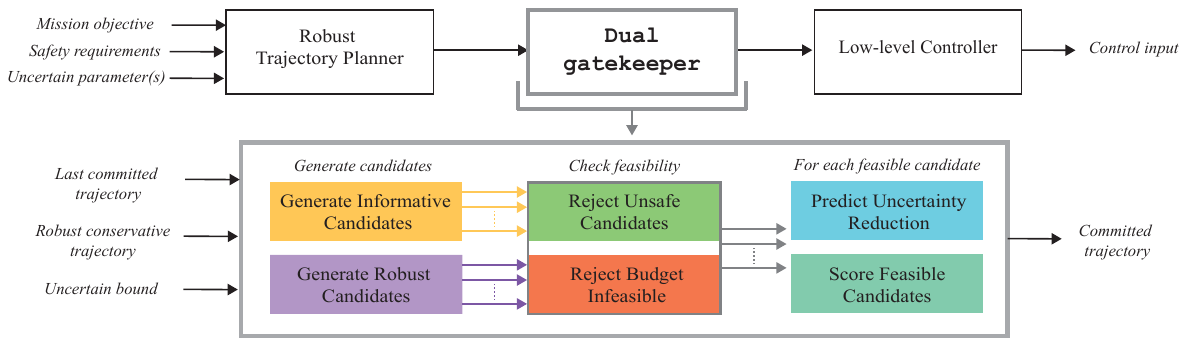}
  \caption{The mission objective, safety requirements, and uncertainty bounds are used by a robust trajectory planner to compute a conservative backup trajectory. The dual gatekeeper generates informative and conservative candidates, rejects those that violate safety or budget constraints, predicts uncertainty reduction for the remaining candidates, and commits the highest-scoring feasible trajectory for execution by the low-level controller.}
  \vspace{-13pt}
  \label{fig:method_block}
\end{figure*}
Definition~\ref{def:robust_policy} provides an abstract characterization of a
robust feedback policy that guarantees constraint satisfaction for all admissible
uncertainties and disturbances. In practice, $\pi_k^{\mathrm{rob}}$ is implemented
using specific robust control mechanisms, such as tube MPC
\cite{lopez2019dynamic_robust2}, robust CBF-based safety filters
\cite{knoedler2025safety}, or robust version of \gatekeeper{} \cite{agrawal2024gatekeeper}. Let $\pi^{\mathrm{rob}}_k : \R \times \Xcal \to \Ucal$
denote the robust feedback policy available at planning time $t_k$.
Given this policy, we denote by
$p^{\mathrm{rob}}_k = \big(p^{\mathrm{rob}}_{x,k},\, p^{\mathrm{rob}}_{u,k}\big)$
the corresponding robust trajectory induced by $\pi^{\mathrm{rob}}_k$ over the horizon
$\Tcal_k = [t_k,t_{k+1}]$, in the sense of \cref{def:traj}.

\begin{definition}[Predicted Cost]
\label{def:mission_cost}
Let $\pi_k: \R \times \Xcal\to\Ucal$ denote an available feedback policy at planning time
$t_k$, let $x_k\in\Xcal$ denote the state at time $t_k$, and let
$\Tcal_k=[t_k,t_{k+1}]$ be a finite horizon of length $t_{k+1} - t_k>0$.
The predicted cost associated with $\pi_k$ is the functional
$\mathcal{J}^{k\to k+1}:\Pi\times\Xcal\times 2^{\R}\to\R_{\ge 0}$
\eqn{
(\pi_k,x_k,\Tcal_k)\mapsto \mathcal{J}^{k\to k+1}(\pi_k,x_k,\Tcal_k),
}
which quantifies the \textit{predicted} performance of executing the policy $\pi_k$
from state $x_k$ over the horizon $\Tcal_k$.
\end{definition}

\subsubsection*{\textbf{Predicted Cost Examples}}

\textbf{(I)} In tube MPC \cite{lopez2019dynamic_robust2}, the robust policy
$\pi^{\mathrm{rob}}_k$ is constructed from a finite-horizon nominal trajectory,
an invariant tube, and an ancillary feedback controller that keeps the
closed-loop state within the tube. The predicted cost
$\mathcal{J}^{k\to k+1}_{\mathrm{rob}}$ is evaluated along the nominal tube
centerline over the horizon $T_k=[t_k,t_{k+1}]$. \textbf{(II)} In a robust \gatekeeper{} framework \cite{agrawal2024gatekeeper}, a candidate trajectory is first generated
from a nominal planning policy and then subjected to a safety verification step
under the uncertainty set $\Theta_k$ and disturbance set $\Wcal$. If the candidate
trajectory satisfies the safety constraints for all admissible uncertainties, it
is \emph{committed} and induces the robust policy $\pi^{\mathrm{rob}}_k$. The
predicted cost $\mathcal{J}^{k\to k+1}_{\mathrm{rob}}$ is then estimated using
$N$ forward rollouts of the committed trajectory over the horizon
$T_k=[t_k,t_{k+1}]$ under admissible uncertainty realizations, and taken as either
the worst-case or expected mission cost across these $N$ samples. Predicted cost of these methods is illustrated in \cref{fig:predicted_cost}.



\begin{prob}
\label{prob:overall_prob}
Consider the system \eqref{eq:system_model} with initial state $x_0$ and initial
parametric uncertainty set $\Theta_0$. Let $t_0<t_1<\cdots < t_k < \cdots$ denote
replanning times, which are not necessarily uniformly spaced, and define
$\Tcal_k=[t_k,t_{k+1}]$.

At each replanning time $t_k$, a \emph{robust reference policy}
$\pi^{\mathrm{rob,ref}}_k:\Xcal\to\Ucal$ is available, representing the default
robust task-execution behavior under the current uncertainty set $\Theta_k$.
The goal is to select a sequence of robust policies
$\{\pi^{\mathrm{rob,sol}}_k\}_{k=0}^{\infty}$ so as to maximize cumulative
uncertainty reduction over the replanning epochs, subject to safety
constraints and a budget on the cumulative \emph{exploration cost}, defined as
the excess predicted cost incurred relative to the robust reference behavior:
\begin{subequations}
\label{eq:overall_prob}
\eqn{
\max_{\{\pi^{\mathrm{rob,sol}}_k\}_{k=0}^{\infty}}
\quad & \sum_{k=0}^{\infty} \Delta w_d(\Theta_k) \\
\text{s.t.}\quad
& x(t)\in\Scal,\quad u(t)\in\Ucal,\quad \forall t\ge t_0, \label{prob1:cons_safety}\\
& \sum_{k=0}^{\infty} \Delta \mathcal{J}^{k}_{\mathrm{exp}}
\;\le\; B_{\mathrm{exp}}. \label{prob1:budget}
}
\end{subequations}
where the uncertainty width reduction over the $k-$th replanning interval is
defined as
\eqn{
\Delta w_d(\Theta_k)
=
w_d(\Theta_k)-w_d(\Theta_{k+1}),
}
and the exploration cost incurred over $\Tcal_k$ is defined as
\eqn{
\Delta \mathcal{J}^{k}_{\mathrm{exp}}
\;\coloneqq\;
\max\!\Big\{
0,\;
\mathcal{J}_{\mathrm{rob,sol}}^{k \to k+1}
-
\mathcal{J}_{\mathrm{rob,ref}}^{k\to k+1}
\Big\}.
}
\end{prob}

\begin{remark}
The exploration budget in \eqref{prob1:budget} is formulated in terms of the
\emph{predicted} exploration cost $\Delta \mathcal{J}^{k}_{\mathrm{exp}}$, rather
than the cost incurred during execution. Constraining the executed exploration
cost directly is generally intractable, as it depends on the particular, unknown
realizations of the parametric uncertainty and disturbances encountered during
each replanning interval. 

Instead, the predicted cost is evaluated prior to execution over the finite
horizon $\Tcal_k$ and may be defined as the worst-case cost over the admissible
uncertainty and disturbance sets. Under this definition, bounding the cumulative
predicted exploration cost guarantees that the executed exploration cost also
remains within the prescribed budget $B_{\mathrm{exp}}$, regardless of the
realized uncertainty. 
\end{remark}

\section{Proposed Solution Framework}
\label{sec:framework}

The overall framework is shown in Fig.~\ref{fig:method_block} and illustrated through the pipeline example in Fig.~\ref{fig:method_squence}. It is proposed as a solution to~\cref{prob:overall_prob}. The objective is to actively
reduce parametric uncertainty through exploration while guaranteeing safety and
respecting a mission-level cost budget.  We first formalize the general framework and then show how it can be instantiated using two different safety mechanisms: tube MPC (\Cref{sec:tube_mpc}) and the robust gatekeeper architecture (\Cref{sec:gatekeeper_robust}).

\subsection{Framework Overview}
The framework operates in a receding-horizon manner with decisions made at
discrete, not necessarily uniformly spaced, replanning times. At each replanning time $t_k$, a robust backup policy is first
constructed to guarantee satisfaction of the state and input constraints under
the current uncertainty set over the remaining mission horizon
(Fig.~\ref{fig:method_squence}[a]). This backup policy represents a conservative
baseline behavior that optimizes the mission objective under worst-case
uncertainty. Next, a collection of candidate policy segments is generated over
finite horizons, including conservative candidates that preserve the backup
behavior and informative candidates designed explicitly to promote uncertainty
reduction (Fig.~\ref{fig:method_squence}[b]). Depending on their construction,
informative candidates may incur additional mission cost and may not satisfy
safety constraints a priori.

Candidate policy segment pairs that cannot be certified as safe or that violate
the mission-level budget are discarded (Fig.~\ref{fig:method_squence}[c]). Among
the remaining feasible candidates, the policy segment that achieves the largest
predicted uncertainty reduction, discounted by horizon length, while remaining
within the budget is selected and committed for execution
(Fig.~\ref{fig:method_squence}[d]). After execution, the uncertainty set is
updated using newly collected data, and a new robust backup policy is
constructed for the next replanning cycle (Fig.~\ref{fig:method_squence}[e]).

By iteratively refining the uncertainty set and recomputing the robust backup
policy, the framework enables progressively less conservative behavior while
maintaining safety and budget feasibility throughout the mission.

\subsection{Robust Backup Policy Construction}
\label{subsec:backup_policy}

At each replanning time $t_k$, the framework begins by constructing a robust backup policy that guarantees safe mission execution under the current
uncertainty set.

\begin{definition}[Robust backup policy]
\label{def:robust_backup_policy}
At replanning time $t_k$, a \emph{robust backup policy} is a feedback law
$\pi^{\mathrm{rob},B}_k : \R \times \Xcal \to \Ucal$ together with a fixed backup
horizon $T_B > 0$ such that, when applied to the system
\eqref{eq:system_model}, the resulting closed-loop trajectory  satisfies 
$x(t) \in \Scal(t), \; \forall w(t) \in \Wcal, \; \forall t \in [t_k,\, t_k + T_B]$, and for all
admissible uncertainty realizations $\theta \in \Theta_k$.
\end{definition}

The robust backup policy $\pi^{\mathrm{rob},B}_k$ defines a conservative baseline
behavior that guarantees constraint satisfaction under worst-case uncertainty
over the fixed horizon $[t_k,\, t_k + T_B]$. The framework then proceeds to
generate candidate policy segments that may temporarily deviate from the backup
behavior in order to reduce parametric uncertainty.

\begin{remark}
\label{rem:TB_choice}
The backup horizon $T_B$ may be selected either as a fixed constant throughout the mission or adapted at each replanning time $t_k$. Both choices are compatible with the proposed framework. Concrete examples of fixed and adaptive selections of $T_B$ are provided in the two instantiations presented in Sections~\ref{sec:tube_mpc} and~\ref{sec:gatekeeper_robust}.
\end{remark}

\subsection{Candidate Policy Segment Generation}
\label{subsec:candidate_generation}

At each replanning time $t_k$, candidate policies are generated in \emph{pairs}
over a common finite horizon: a conservative policy segment that preserves the
robust backup behavior and an informative policy segment that may deviate in
order to reduce uncertainty in the unknown parameter $\theta$.

Candidate horizons are generated by uniformly increasing the horizon length in
increments of $T_c$, with the horizon length capped by the backup horizon $T_B$.
Accordingly, the set of candidate horizon lengths is defined as
\begin{equation}
\label{eq:set_horizons}
\Tcal^{c}_k
=
\big\{\, T^{c,k}_i \;:\; T^{c,k}_i = \min\{ iT_c,\; T_B \} \,\big\}_{i=1}^{N_k},
\end{equation}
where $N_k := \lceil T_B / T_c \rceil$. Each candidate horizon length
$T^{c,k}_i$ induces the time interval $[t_k,\; t_k + T^{c,k}_i]$ over which both
the conservative and informative policy segments are generated and evaluated.

\begin{definition}[Conservative candidate policy segment]
\label{def:conservative_segment}
The $i-$th \emph{conservative candidate policy segment} generated at time $t_k$
over the horizon $T^{c,k}_i$ is defined as the restriction of the robust backup
policy $\pi^{\mathrm{rob},B}_{k,i}$ to the interval
$[\,t_k,\; t_k + iT_c\,]$, and therefore preserves the backup behavior over this
horizon.
\end{definition}

\begin{definition}[Informative candidate policy segment]
\label{def:informative_segment}
The $i-$th \emph{informative candidate policy segment} generated at time $t_k$ over
the horizon $T^{c,k}_i$ is a feedback policy
$\pi^{I}_{k,i}$ defined on $[\,t_k,\; t_k + iT_c\,]$ and designed to reduce
uncertainty in the unknown parameter $\theta$. The informative policy segment is
required to satisfy a terminal recoverability condition: the closed-loop state at
time $t_k + iT_c$ must lie in a set from which the robust backup policy
$\pi^{\mathrm{rob},B}_k$ can be safely applied for the remainder of the mission.
\end{definition}

\begin{definition}[Candidate policy segment pair]
\label{def:candidate_pair}
The $i-$th \emph{candidate policy segment pair} generated at time $t_k$ is the pair
\[
\big(\pi^{I}_{k,i},\;\pi^{\mathrm{rob},B}_{k,i}\big)
\]
defined over the common horizon
$T^{c,k}_i = [\,t_k,\; t_k + iT_c\,]$,
where $\pi^{I}_{k,i}$ is an informative policy segment and
$\pi^{\mathrm{rob},B}_{k,i}$ denotes the restriction of the robust backup policy
$\pi^{\mathrm{rob},B}_k$ to the same horizon.
\end{definition}

Depending on how the informative policy segment is constructed, safety may or may
not be ensured over the candidate horizon. We therefore define a notion of
\emph{validity} to identify those candidate policy segment pairs for which safety
can be certified.

\begin{definition}[Valid candidate policy segment pair]
\label{def:valid_pair}
A candidate policy segment pair
$\big(\pi^{I}_{k,i},\;\pi^{\mathrm{rob},B}_{k,i}\big)$
associated with horizon $T^{c,k}_i$ is said to be \emph{valid} if there exists a
corresponding \emph{robustified informative policy segment}
$\pi^{\mathrm{rob},I}_{k,i}$, defined over the same horizon $T^{c,k}_i$, such that
when applied to the system \eqref{eq:system_model}, the resulting closed-loop
behavior satisfies the state and input constraints for all admissible uncertainty
realizations and disturbances over $T^{c,k}_i$.
\end{definition}

The set of all valid candidate policy segment pairs generated at time $t_k$ is defined as \begin{equation} \label{eq:valid_candidate_set} \Vcal^{c}_k = \Big\{ \big(\pi^{\mathrm{rob},I}_{k,i},\;\pi^{\mathrm{rob},B}_{k,i}\big) \;\big|\; \text{pair valid by \textit{Def}.~\ref{def:valid_pair}} \Big\}. \end{equation} We also define the set of all conservative candidates as \begin{equation} \Vcal_{k}^{\mathrm{cons}} = \big\{\, \pi^{\mathrm{rob},B}_{k,i} \big\}_{i = 1}^{N_k}. \end{equation}

\subsection{Committing a Candidate Policy Segment}
\label{subsec:commitment}

Having defined the set of valid candidate policy segment pairs $\Vcal^{c}_k$ and
the set of conservative policy segments $\Vcal^{\mathrm{cons}}_k$, the final step
at replanning time $t_k$ is to decide which policy segment to commit for
execution. The guiding principle is to drive uncertainty reduction as rapidly as
possible while never compromising safety or violating the mission-level budget.

Each valid candidate policy segment pair
$\big(\pi^{\mathrm{rob},I}_{k,i},\,\pi^{\mathrm{rob},B}_{k,i}\big)\in\Vcal^{c}_k$
is assigned a score that reflects the predicted uncertainty reduction achieved by
its informative policy segment over the common horizon. To prioritize early
information gain, this score is discounted by the horizon length:
\begin{equation}
s^{c,k}_i
\;=\;
\exp\big(-\lambda T^{c,k}_i\big)\,\Delta \xi_i,
\qquad \lambda > 0,
\label{eq:score}
\end{equation}
where $\Delta \xi_i$ denotes the predicted reduction in the \emph{average}
directional width of the uncertainty set.

Specifically, if $\Theta^k$ denotes the uncertainty set at time $t_k$ and
$\Theta^{k+1,i}$ denotes the predicted uncertainty set after executing candidate
$i$, then
\begin{equation}
\Delta \xi_i
\;=\;
\frac{1}{|\Dcal|}
\sum_{d \in \Dcal}
\Big( w_d(\Theta^k) - w_d(\Theta^{k+1,i}) \Big),
\label{eq:avg-width-reduction}
\end{equation}
where $w_d(\Theta)$ is defined in \textit{Def}.~\ref{def:width}. The prediction of
$\Theta^{k+1,i}$ and the computation of $w_d(\Theta)$ are detailed in
Section~\ref{sec:uncertainty_pred}.

To enforce budget feasibility, let $\mathcal{J}^k_{\mathrm{exec}}$ denote the
accumulated predicted exploration cost incurred up to replanning time $t_k$,
defined as
\eqn{
\mathcal{J}^k_{\mathrm{exec}}
\;:=\;
\sum_{j=0}^{k-1}\Delta\mathcal{J}^{j}_{\mathrm{exp}}.
}

For a candidate policy segment indexed by $i$, define the predicted exploration
cost incurred over its candidate horizon
$T^{c,k}_i=[t_k,\,t_k+T^{c,k}_i]$ as the excess predicted cost relative to the
conservative segment:
\begin{equation}
\Delta\mathcal{J}^{k}_{\mathrm{exp}}(i)
\;:=\;
\max\!\Big\{0,\;
\mathcal{J}^{k\to k+i}_{\mathrm{rob},I}
-
\mathcal{J}^{k\to k+i}_{\mathrm{rob},B}
\Big\}.
\label{eq:exp-cost-local}
\end{equation}

A candidate pair is budget-feasible if committing its informative policy segment
does not cause the cumulative exploration cost to exceed the budget
$B_{\mathrm{exp}}$. Accordingly, we define the feasible index set
\eqn{
\Fcal^{c}_k
=
\Big\{\,
i \;:\;
(\pi^{\mathrm{rob},I}_{k,i},\,\pi^{\mathrm{rob},B}_{k,i})\in\Vcal^{c}_k,\; \\ 
\mathcal{J}^k_{\mathrm{exec}}
+
\Delta\mathcal{J}^{k}_{\mathrm{exp}}(i)
\le
B_{\mathrm{exp}}
\Big\}.
\label{eq:feasible-set}
}


The index of the committed candidate is defined as
\begin{equation}
i^\star
=
\begin{cases}
\displaystyle \arg\max_{i \in \Fcal^{c}_k} s^{c,k}_i,
& \text{if } \Fcal^{c}_k \neq \emptyset, \\[1ex]
1, & \text{if } \Fcal^{c}_k = \emptyset .
\end{cases}
\label{eq:best-candidate}
\end{equation}

If $\Fcal^{c}_k \neq \emptyset$, the robust informative policy segment
$\pi^{\mathrm{rob},I}_{k,i^\star}$ is committed over the horizon
$T^{c,k}_{i^\star}$. If $\Fcal^{c}_k = \emptyset$, no informative candidate is
both valid and budget-feasible, and the framework commits the conservative policy
segment with the shortest horizon, $\pi^{\mathrm{rob},B}_{k,1}$.

\begin{definition}[Committed policy segment]
\label{def:committed_policy}
At replanning time $t_k$, the committed policy segment $\pi^{\mathrm{com}}_k$ is
defined as
\begin{equation}
\pi^{\mathrm{com}}_k
\;=\;
\begin{cases}
\pi^{\mathrm{rob},I}_{k,i^\star}, & \Fcal^{c}_k \neq \emptyset, \\[6pt]
\pi^{\mathrm{rob},B}_{k,1}, & \Fcal^{c}_k = \emptyset,
\end{cases}
\label{eq:committed}
\end{equation}
where $\pi^{\mathrm{rob},B}_{k,1}$ denotes the restriction of the robust backup
policy to the shortest conservative horizon.
\end{definition}

After committing the selected policy segment, the system executes it and the
next replanning time is set to
\eqn{
t_{k+1} = t_k + T^{c,k}_{i^\star}.
}
At time $t_{k+1}$, the uncertainty set is updated using set membership
identification (SMID) \cite{milanese2004set, lopez2019adaptive}, the robust backup
policy is recomputed, and the procedure repeats.

\subsection{Safety and Budget Guarantees}
\label{subsec:safety_budget_gua}

This subsection establishes that the proposed framework guarantees
\emph{(i)} safety with respect to state and input constraints and
\emph{(ii)} feasibility with respect to the prescribed exploration budget.
The guarantees are stated for the closed-loop execution induced by the
sequence of committed policy segments generated by the framework.

\begin{theorem}
\label{thm:psol_safe_budget}
Let
\eqn{
\pi^{\mathrm{sol}}
=
\{\pi^{\mathrm{com}}_0,\;\pi^{\mathrm{com}}_1,\;\ldots\}
}
denote the sequence of policy segments committed by the framework, where
each $\pi^{\mathrm{com}}_k$ is executed over the interval
$[t_k,t_{k+1}]$ with $t_{k+1}=t_k+T^{c,k}_{i^\star}$.

If at each replanning time $t_k$ the committed policy segment is selected
according to \textit{Def.}~\ref{def:committed_policy}, then the resulting
closed-loop trajectory satisfies
\eqn{
x(t) \in \Scal(t), \quad u(t) \in \Ucal,
\qquad \forall t \ge t_0,
}
and the cumulative exploration cost satisfies
\eqn{
\sum_{k=0}^{\infty}
\Delta\mathcal{J}^{k}_{\mathrm{exp}}
\;\le\;
B_{\mathrm{exp}}.
}
\end{theorem}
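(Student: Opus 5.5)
We prove the two claims separately, each by induction over the replanning epochs $k=0,1,2,\dots$. The partition $[t_0,t_1],[t_1,t_2],\dots$ must cover $[t_0,\infty)$. This holds because every committed horizon satisfies $T^{c,k}_{i^\star}\ge \min\{T_c,T_B\}>0$, since $i^\star\ge 1$ and the candidate horizons are given by \eqref{eq:set_horizons}. Hence $t_k\ge t_0+k\min\{T_c,T_B\}\to\infty$, and every $t\ge t_0$ lies in some $[t_k,t_{k+1}]$. It therefore suffices to show that on each interval $[t_k,t_{k+1}]$ the closed loop under $\pi^{\mathrm{com}}_k$ satisfies $x(t)\in\Scal(t)$ and $u(t)\in\Ucal$, and that the partial sums of $\Delta\mathcal{J}^k_{\mathrm{exp}}$ never exceed $B_{\mathrm{exp}}$.

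\emph{Safety.} Fix $k$ and split on the two cases of \eqref{eq:committed}.
\begin{itemize}
\item If $\Fcal^{c}_k\neq\emptyset$, then $i^\star\in\Fcal^c_k$. By \eqref{eq:feasible-set}, the associated pair lies in $\Vcal^c_k$, so by Def.~\ref{def:valid_pair} the robustified informative segment $\pi^{\mathrm{rob},I}_{k,i^\star}$ keeps the state and input admissible over $[t_k,t_k+T^{c,k}_{i^\star}]$. This holds for all $\theta\in\Theta_k$ and all admissible disturbances.
\item If $\Fcal^c_k=\emptyset$, the committed segment $\pi^{\mathrm{rob},B}_{k,1}$ is a restriction of the robust backup policy to $[t_k,t_k+T^{c,k}_1]\subseteq[t_k,t_k+T_B]$. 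Def.~\ref{def:robust_backup_policy} then gives constraint satisfaction on this interval for all $\theta\in\Theta_k$ and $w\in\Wcal$.
\end{itemize}
In both cases the guarantee is robust over $\Theta_k$, so we must check that the true parameter belongs to $\Theta_k$. Lemma~\ref{lem:SMID_consistency} gives exactly $\theta\in\Theta_k$ for all $k$, so the guarantee applies to the actual closed loop. The inductive step also requires that a robust backup policy exists at the new state $x(t_{k+1})$ under the updated set $\Theta_{k+1}$. We plan to obtain this from two facts. First, the terminal recoverability condition of Def.~\ref{def:informative_segment}, or, for the backup branch, the fact that the state remains on a trajectory of a policy that is still robust beyond $t_{k+1}$. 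Second, the nesting $\Theta_{k+1}\subseteq\Theta_k$ from Lemma~\ref{lem:SMID_consistency}: a policy that is robust for a larger parameter set remains robust for a smaller one. Concatenating the intervals then yields the safety claim for all $t\ge t_0$.

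\emph{Budget.} We show by induction that $\mathcal{J}^k_{\mathrm{exec}}\le B_{\mathrm{exp}}$ for every $k$. The base case holds because $\mathcal{J}^0_{\mathrm{exec}}=0\le B_{\mathrm{exp}}$, given $B_{\mathrm{exp}}\ge 0$. For the inductive step, split on the same two cases.
\begin{itemize}
\item If the informative branch is taken, then $\mathcal{J}^{k+1}_{\mathrm{exec}}=\mathcal{J}^k_{\mathrm{exec}}+\Delta\mathcal{J}^k_{\mathrm{exp}}(i^\star)\le B_{\mathrm{exp}}$ directly by membership of $i^\star$ in \eqref{eq:feasible-set}.
\item If the conservative branch is taken, the executed policy equals the reference backup. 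By \eqref{eq:exp-cost-local} its increment is $\max\{0,\mathcal{J}_{\mathrm{rob},B}-\mathcal{J}_{\mathrm{rob},B}\}=0$, so $\mathcal{J}^{k+1}_{\mathrm{exec}}=\mathcal{J}^k_{\mathrm{exec}}$.
\end{itemize}
The partial sums of $\Delta\mathcal{J}^k_{\mathrm{exp}}$ are nonnegative and nondecreasing, since each term is a $\max\{0,\cdot\}$, and they are bounded by $B_{\mathrm{exp}}$. The series therefore converges and $\sum_{k=0}^\infty\Delta\mathcal{J}^k_{\mathrm{exp}}=\sup_k\mathcal{J}^k_{\mathrm{exec}}\le B_{\mathrm{exp}}$.

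The main obstacle is the recursive feasibility part of the safety argument: guaranteeing that a robust backup policy, and hence at least the conservative candidate, exists at every $t_{k+1}$. The budget argument is mechanical. For the informative branch, the terminal recoverability condition gives existence of the backup at $t_{k+1}$ directly. For the conservative branch, we intend to argue that the remaining part of the previous backup policy, shifted in time, is a valid backup at $t_{k+1}$ under $\Theta_{k+1}\subseteq\Theta_k$. If $T_B$ is finite this may require that the backup end in a robust invariant set, so we would state that as the standing assumption implicit in Def.~\ref{def:robust_backup_policy}.
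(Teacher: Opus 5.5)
Your proposal is correct and follows essentially the same route as the paper's proof. Both argue by induction over the replanning times with the two-case split of Def.~\ref{def:committed_policy}: safety on $[t_k,t_{k+1}]$ comes from Def.~\ref{def:robust_backup_policy} or from validity (Def.~\ref{def:valid_pair}), and budget feasibility comes from membership of $i^\star$ in $\Fcal^{c}_k$, with a zero increment on the conservative branch. Your extra steps are refinements the paper leaves implicit: showing the intervals cover $[t_0,\infty)$ (valid for the fixed $T_B>0$ of the definition), invoking Lemma~\ref{lem:SMID_consistency} so the true parameter lies in $\Theta_k$, and passing to the infinite sum via bounded monotone partial sums. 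The recursive-feasibility obstacle you flag is not proved in the paper either; it simply takes the existence of a robust backup policy at every $t_k$ as part of the framework, which is the same standing assumption you fall back on.
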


\begin{proof} We prove safety and budget feasibility by induction over the replanning times. At $t_0$, safety holds by construction of the initial robust backup policy. The accumulated exploration cost is zero, hence budget feasibility holds. Assume that at replanning time $t_k$ the state $x_k$ is admissible and the
accumulated exploration cost satisfies
$\mathcal{J}^k_{\mathrm{exec}} \le B_{\mathrm{exp}}$.
By \textit{Def.}~\ref{def:committed_policy}, the committed policy segment
$\pi^{\mathrm{com}}_k$ is either:
\emph{(i)} a conservative backup segment, or
\emph{(ii)} a robustified informative segment from a budget-feasible candidate
pair.

In case (i), safety follows directly from the definition of the robust backup
policy, and no exploration cost is incurred.

In case (ii), validity of the candidate pair guarantees that
$\pi^{\mathrm{rob},I}_{k,i^\star}$ satisfies all state and input constraints over
$[t_k,t_{k+1}]$. Budget feasibility follows from the feasibility check at
$t_k$, which enforces
\eqn{
\mathcal{J}^{k}_{\mathrm{exec}}
+
\Delta\mathcal{J}^{k}_{\mathrm{exp}}(i^\star)
\;\le\;
B_{\mathrm{exp}}.
}
After executing the committed policy segment, the uncertainty set is updated and
the accumulated exploration cost is incremented accordingly. Thus,
$\mathcal{J}^{k+1}_{\mathrm{exec}} \le B_{\mathrm{exp}}$, and the induction
hypothesis holds at $t_{k+1}$. By induction, safety and budget feasibility are preserved at every replanning
time.
\end{proof}

\section{Prediction of Uncertainty Shrinkage}
\label{sec:uncertainty_pred}

\begin{algorithm}[t]
\caption{Uncertainty Shrinkage Prediction via Parallel Rollouts}
\label{alg:uncert_pred}
\footnotesize
\SetAlgoLined

\KwIn{Candidate policy segment $\pi_{k,i}$ over horizon $T^{c,k}_i=[t_k,t_k+iT_c]$; 
state $x_k$; uncertainty set $\Theta^k$; disturbance bound $\Wcal$; 
SMID update rule; direction set $\Dcal$; number of rollouts $N$.}

\KwOut{Predicted uncertainty reduction $\Delta \xi_i$}

\BlankLine

Initialize rollout buffer $\{\Delta \xi_i^{(\ell)}\}_{\ell=1}^{N} \leftarrow 0$\;

\For{$\ell = 1,\dots,N$ \textbf{in parallel}}{

Sample $\theta^{(\ell)} \sim \mathcal{U}(\Theta^k)$\;

Sample disturbances $w^{(\ell)}_j \sim \mathcal{U}(\Wcal)$ for $j=1,\dots,N_i$\;

Forward simulate \eqref{eq:system_model} from $x_k$ under $\pi_{k,i}$ using $(\theta^{(\ell)},\{w^{(\ell)}_j\})$\;

Form SMID data $\{(\Phi^{(\ell)}_j,z^{(\ell)}_j)\}_{j=1}^{N_i}$\;

$\Theta^{k+1,i,(\ell)} \leftarrow
\mathrm{SMID}\!\left(\Theta^k,\{(\Phi^{(\ell)}_j,z^{(\ell)}_j)\}_{j=1}^{N_i}\right)$\;

$\Delta \xi_i^{(\ell)} \leftarrow
\frac{1}{|\Dcal|}\sum_{d\in\Dcal}
\big(w_d(\Theta^k)-w_d(\Theta^{k+1,i,(\ell)})\big)$\;

}

$\Delta \xi_i \leftarrow \frac{1}{N}\sum_{\ell=1}^{N}\Delta \xi_i^{(\ell)}$\;

\end{algorithm}

To evaluate informative candidate policy segments, the framework must predict
the reduction in parametric uncertainty induced by executing a candidate
trajectory. We consider two approaches for this prediction. The first approach
is a simulation-based method that estimates uncertainty shrinkage through
forward rollouts under sampled parameter and disturbance realizations. The
second approach predicts uncertainty shrinkage directly from the planned
trajectory by analyzing the set of parameters that remain consistent with the
resulting regression measurements under bounded noise. We describe these two
approaches below.

\subsection{Simulation-Based Uncertainty Shrinkage Prediction}
To evaluate informative candidate policy segments, the framework predicts the
expected reduction in parametric uncertainty induced by executing a candidate
policy segment $\pi^{\mathrm{rob},I}_{k,i}$ over the horizon
$T^{c,k}_i = [t_k,\; t_k + iT_c]$. This prediction is obtained via forward
simulation of the closed-loop system for a finite number of admissible parameter
and disturbance realizations.

Given the current state $x_k$ and uncertainty set $\Theta^k$, $N$ independent
rollouts are generated by sampling
$\theta^{(\ell)} \stackrel{\mathrm{i.i.d.}}{\sim} \mathcal U(\Theta^k)$ and a
sequence of additive disturbances
$\{w^{(\ell)}_j\}_{j=1}^{N_i} \stackrel{\mathrm{i.i.d.}}{\sim} \mathcal U(\Wcal)$,
where $N_i$ denotes the number of discrete simulation steps over the horizon
$T^{c,k}_i$. For each $\ell \in \{1,\dots,N\}$, the system dynamics
\eqref{eq:system_model} are forward simulated from $x_k$ under the informative
policy segment $\pi^{I}_{k,i}$ (\cref{alg:uncert_pred}, Lines~2--5).

From each rollout, a sequence of regressor--measurement pairs
$\{(\Phi^{(\ell)}_j, z^{(\ell)}_j)\}_{j=1}^{N_i}$ is constructed and used to
compute a predicted post-execution uncertainty set
$\Theta^{k+1,i,(\ell)}$ via set-membership identification
(\cref{alg:uncert_pred}, Lines~6--7). 

The effect of candidate $i$ on the uncertainty set is quantified by the reduction
in the \emph{average directional width}. For each rollout $\ell$, the reduction is
computed as
\begin{equation}
\Delta \xi_i^{(\ell)}
=
\frac{1}{|\Dcal|}
\sum_{d \in \Dcal}
\Big(
w_d(\Theta^k)
-
w_d(\Theta^{k+1,i,(\ell)})
\Big),
\label{eq:rollout_width_reduction}
\end{equation}
where $w_d(\cdot)$ denotes the directional width defined in
\textit{Def.}~\ref{def:width}; see~\cref{alg:uncert_pred}, Line~8.

Finally, the predicted uncertainty reduction associated with candidate $i$ is
defined as the empirical average over the $N$ independent rollouts,
\begin{equation}
\Delta \xi_i
=
\frac{1}{N}
\sum_{\ell=1}^{N}
\Delta \xi_i^{(\ell)},
\label{eq:predicted_uncertainty_reduction}
\end{equation}
which is used directly in the candidate scoring rule \eqref{eq:score}.

\subsection{Data-Consistency–Based Uncertainty Shrinkage Prediction}

In this section, we quantify the pre-execution impact of a planned trajectory on parameter uncertainty via the \emph{width} in \textit{Def}.~\ref{def:width}. We present the method for a general trajectory $p=(p_x,p_u)$ on $[t_i,t_f]$. 
Let $t_j$, $j=1,\dots,N_j$ denote the sampling times, and define $
\Phi_j=\Phi\big(p_x(t_j),\,p_u(t_j)\big)\in\R^{c\times p},\quad
z_j=z(t_j)\in\R^c,\quad
w_j=w(t_j)\in\R^c,\;\; \|w_j\|_\infty\le \overline w .
$
Stacking the regressors gives
\eqn{
A =
\begin{bmatrix}\Phi_1 & \Phi_2 & \cdots & \Phi_{N_j}\end{bmatrix}^{\top}
\in\R^{M\times p},\quad M=N_j c .
\label{eq:Astack}
}

Now let $\theta^\star$ denote the true (unknown) parameter. Two parameters $\theta$ and $\theta^\star$ can produce the same stacked data under bounded noises $w_1,w_2$ if and only if
\eqn{
A\theta + w_1 = A\theta^\star + w_2, 
\quad 
\|w_1\|_{\infty},\|w_2\|_{\infty} \le \overline w .
}
The key question is: \textit{under what conditions can two distinct parameters $\theta$ and $\theta^\star$ produce the same measurements within the noise bound?}
If many such parameters remain feasible, the uncertainty set stays large; if few remain, it shrinks.
Hence, the fewer alternative parameters that fit the data within the noise bound, the greater the uncertainty reduction achieved by the planned trajectory.
\begin{remark}
The uncertainty shrinkage prediction in this section characterizes uncertainty reduction along a planned trajectory under bounded additive disturbances and bounded parametric uncertainty, but does not account for deviation between the planned trajectory and the executed closed-loop state-input trajectory, which is left for future work.
\end{remark} 
\begin{lemma}
\label{lemma:2w}
Let $\theta^\star\in\Theta$ be the (unknown) true parameter, and define 
$e_\theta=\theta-\theta^\star\in\R^p$. 
There exists some $w_j$ with $\|w_j\|_{\infty}\le \overline w$ such that
\eqn{
\|\Phi_j(\theta^\star-\theta)+w_j\|_\infty 
\le \overline w 
\iff 
\|\Phi_j e_\theta\|_\infty \le 2\overline w ,
\label{eq:single-iff}
}
and combining across all $j$ yields $\|A e_\theta\|_\infty \le 2\overline w$.
\end{lemma}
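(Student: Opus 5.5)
The plan is to prove the scalar equivalence in \eqref{eq:single-iff} by reading it in its intended form: there exist $w_{j}$ and $w'_{j}$ with $\|w_{j}\|_\infty,\|w'_{j}\|_\infty\le\overline w$ and $\Phi_j\theta+w'_j=\Phi_j\theta^\star+w_j$. Writing $\Phi_j(\theta^\star-\theta)+w_j=w'_j$, this is exactly the left-hand condition. The two directions are then handled separately, coordinatewise in the $\infty$-norm.

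\emph{($\Rightarrow$).} If $\|\Phi_j(\theta^\star-\theta)+w_j\|_\infty\le\overline w$ for some admissible $w_j$, I will apply the triangle inequality:
\begin{align*}
\|\Phi_j e_\theta\|_\infty=\|\Phi_j(\theta^\star-\theta)\|_\infty\le\|\Phi_j(\theta^\star-\theta)+w_j\|_\infty+\|w_j\|_\infty\le 2\overline w .
\end{align*}

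\emph{($\Leftarrow$).} Given $\|\Phi_j e_\theta\|_\infty\le 2\overline w$, I will construct the noise explicitly. Let $v=\Phi_j(\theta^\star-\theta)=-\Phi_j e_\theta$ and choose $w_j=-\tfrac12 v$. Then $\|w_j\|_\infty=\tfrac12\|v\|_\infty\le\overline w$, and
\begin{align*}
\|v+w_j\|_\infty=\tfrac12\|v\|_\infty\le\overline w .
\end{align*}
So the residual $w'_j=\tfrac12 v$ is also admissible, and both noise terms lie in the bound. This splitting of the discrepancy equally between the two noises is the only non-routine step. It works because the $\infty$-ball is symmetric and convex: any vector of norm at most $2\overline w$ is the difference of two vectors of norm at most $\overline w$.

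\emph{Stacking.} For the final claim, I will use the fact that $A$ stacks the $\Phi_j$ row-blockwise, as in \eqref{eq:Astack}. Then $\|Ae_\theta\|_\infty=\max_j\|\Phi_j e_\theta\|_\infty$, so requiring the per-sample condition for every $j=1,\dots,N_j$ is equivalent to $\|Ae_\theta\|_\infty\le 2\overline w$. Because the noises are chosen independently per sample (only the bound $\|w_j\|_\infty\le\overline w$ is imposed), the per-$j$ constructions from the converse direction can be concatenated without conflict.

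The main obstacle is interpretive rather than technical. The statement as written quantifies ``there exists some $w_j$'' outside the equivalence, and the right-hand side does not involve $w_j$. I will therefore make the existential explicit on the left-hand side, as above, and note that the forward direction holds for every admissible $w_j$.
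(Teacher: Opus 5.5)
Your proposal is correct and follows the paper's proof. The forward direction is the same triangle-inequality bound, and the converse uses the same explicit-witness construction. The only difference is the witness: you take $w_j=\tfrac12\Phi_j e_\theta$, splitting the discrepancy evenly, while the paper takes $w_j=\mathrm{clip}(\Phi_j e_\theta,\overline w)$ with residual $\max\{\|\Phi_j e_\theta\|_\infty-\overline w,0\}$; both are valid, and yours carries over verbatim to any norm.
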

\begin{proof} 

We first show that \eqref{eq:single-iff} holds. 
($\Rightarrow$) Since $e_{\theta} = \theta - \theta^\star$, we have $\Phi_j(\theta^\star-\theta) = -\Phi_j e_{\theta}$. Thus \begin{subequations}\label{lemma:trick1}
\eqn{
\|\Phi_j(\theta^\star-\theta)+ w_j\|_\infty
&= \|-\Phi_j e_\theta+ w_j\|_\infty \\
&= \|\Phi_j e_\theta- w_j\|_\infty \le \overline w 
}    
\end{subequations}
Now, by the triangle inequality:
\begin{subequations}
 \eqn{
\|\Phi_j e_\theta\|_\infty 
&= \|(\Phi_j e_\theta -w_j) + w_j \|_\infty \\ 
&\le \|\Phi_j e_\theta- w_j\|_\infty + \|w_j\|_\infty \\
&\le \overline w+\overline w
=2\overline w .
}   
\end{subequations}

($\Leftarrow$) Let  $\|\Phi_j e_\theta\|_\infty\le 2\overline w$, choose 
$w_j =\,\mathrm{clip}(\Phi_j e_\theta,\,\overline w)$, where $\mathrm{clip}(\Phi_j e_\theta,\,\overline w) = \mathrm{sign}(\Phi_j e_\theta) \odot \min\{|\Phi_j e_\theta|,\overline w\}$. Thus, 
\begin{subequations}
    \eqn{
    \|w_j\|_\infty &\le \overline w \\
    \|\Phi_j e_\theta - w_j\|_\infty &= \max \{\|\Phi_j e_\theta\|_\infty - \overline w, 0\} \le \overline w 
    }
\end{subequations}
Since $\Phi_j(\theta^\star-\theta) = -\Phi_j e_{\theta}$ and using \eqref{lemma:trick1}, we have 
\eqn{
\|\Phi_j(\theta^\star-\theta)+ w_j\|_\infty = \|\Phi_j e_\theta- w_j\|_\infty 
\le \overline w
}
Thus, combining \eqref{eq:single-iff} for all $j$ yields $\|A e_\theta\|_\infty \le 2\overline{w}$.
\end{proof}
\begin{figure*}[t]
  \centering
  \includegraphics[width=2.05\columnwidth]{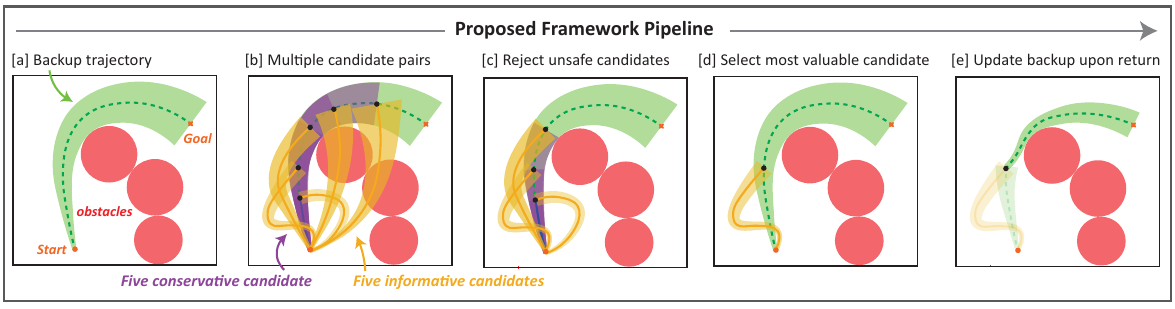}
  \caption{The proposed framework at a glance. Starting from a conservative backup trajectory, multiple candidate trajectories are generated, including both conservative and informative options. Unsafe candidates are rejected, and the most valuable safe candidate is selected for execution. Upon returning to the backup trajectory, the backup plan is updated. The candidate trajectories represent variations in the trajectory state space; their depiction in the physical environment is purely illustrative.}
  \vspace{-13pt}
  \label{fig:method_squence}
\end{figure*}
\cref{lemma:2w} states that two parameters are \emph{indistinguishable} 
when their offset $e_\theta$ satisfies $\|A e_\theta\|_\infty \le 2\overline{w}$, 
meaning that the observed data could equally well be explained by either parameter, given bounded noise. Offsets exceeding $2\overline{w}$ are ruled 
out, motivating the definition of the feasible error set.
\begin{definition}[Error set]\label{def:Ecal}
Given the planned trajectory (through $A$), $\overline w$, and $e_\theta = \theta - \theta^\star$, define
\eqn{
\Ecal_\theta \;:=\; \{\,e_\theta\in\R^p:\ \|A e_\theta\|_\infty \le 2\overline w\,\}.
\label{eq:E-def}
}
\end{definition}
The error set $\Ecal_\theta := \{e_\theta\in\R^p : \|Ae_\theta\|_\infty \le 2\overline w\}$ collects all feasible offsets $e_\theta = \theta-\theta^\star$ under the trajectory and noise bound. 
Equivalently, $\theta$ is feasible iff $\theta-\theta^\star \in \Ecal_\theta$, i.e., $\theta \in \theta^\star + \Ecal_\theta$. 
Since $\theta \in \Theta$, the feasible parameter set is
\eqn{
\Theta_{N_j}(\theta^\star) \;=\; \Theta \cap (\theta^\star + \Ecal_\theta).
}
where $\Theta_{N_j}(\theta^\star)$ is the predicted parameter set after $N_j$ planned samples. We measure the width of the predicted set $\Theta_{N_j}(\theta^\star)$ along a direction $d\in\R^p$. 
Choosing $d=e_{\theta,i}$ yields the width of the $i$-th parameter. 
By definition,
\eqn{
w_d \big(\Theta_{N_j}(\theta^\star)\big) 
= w_d \big(\Theta \cap (\theta^\star+\Ecal_\theta)\big).
}
Because the width of an intersection cannot exceed that of either set, and width is translation-invariant ($w_d(\theta^\star+\Ecal_\theta)=w_d(\Ecal_\theta)$), we obtain
\eqn{
w_d \big(\Theta_{N_j}(\theta^\star)\big) 
\;\le\; \min \!\big(w_d(\Theta),\, w_d(\Ecal_\theta)\big).
}
We next show how $w_d(\Ecal_\theta)$ can be computed from the planned trajectory.

\begin{lemma}\label{lem:width-2h}
 For any $d\in\R^p\!\setminus\!\{0\}$,
\eqn{
\label{eq:width-is-2h}
w_d(\Ecal_\theta)=2\,h_{\Ecal_\theta}(d),
\qquad
h_{\Ecal_\theta}(d)=\sup_{e_\theta \in \Ecal_{\theta}} d^\top e_\theta}
\end{lemma}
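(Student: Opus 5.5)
The plan is to exploit the central symmetry of $\Ecal_\theta$. By \textit{Def}.~\ref{def:width},
\[
w_d(\Ecal_\theta)=\sup_{e_\theta\in\Ecal_\theta} d^\top e_\theta-\inf_{e_\theta\in\Ecal_\theta} d^\top e_\theta = h_{\Ecal_\theta}(d)-\Big(-\sup_{e_\theta\in\Ecal_\theta}(-d)^\top e_\theta\Big)=h_{\Ecal_\theta}(d)+h_{\Ecal_\theta}(-d).
\]
It therefore suffices to show that $h_{\Ecal_\theta}(-d)=h_{\Ecal_\theta}(d)$.

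\textbf{Step 1 (symmetry).} If $e_\theta\in\Ecal_\theta$, then $\|A(-e_\theta)\|_\infty=\|Ae_\theta\|_\infty\le 2\overline w$, so $-e_\theta\in\Ecal_\theta$. Thus $\Ecal_\theta=-\Ecal_\theta$.

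\textbf{Step 2 (support function).} Substituting $e'_\theta=-e_\theta$ and using Step 1, which makes the map $e_\theta\mapsto -e_\theta$ a bijection of $\Ecal_\theta$ onto itself,
\[
h_{\Ecal_\theta}(-d)=\sup_{e_\theta\in\Ecal_\theta}(-d)^\top e_\theta=\sup_{e'_\theta\in\Ecal_\theta} d^\top e'_\theta=h_{\Ecal_\theta}(d).
\]
Combining this with the displayed identity gives $w_d(\Ecal_\theta)=2h_{\Ecal_\theta}(d)$, which is \eqref{eq:width-is-2h}.

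\textbf{Main obstacle: unboundedness.} If $A$ is rank deficient, $\Ecal_\theta$ contains the nontrivial subspace $\ker A$. For $d$ not orthogonal to $\ker A$, both the supremum and the infimum are infinite, and the difference $\sup-\inf$ must be read as $+\infty-(-\infty)=+\infty$ in the extended reals. The resolution is to observe that the symmetry argument still forces $h_{\Ecal_\theta}(-d)=h_{\Ecal_\theta}(d)$: either both are finite, or both equal $+\infty$. Hence the identity holds in $[0,\infty]$. I would add a remark that it is finite exactly when $d\perp\ker A$, which holds for every $d$ when $A$ has full column rank, i.e. when the planned data satisfy a finite-excitation-type condition in the spirit of \textit{Def}.~\ref{def:FE}. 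Nonemptiness is automatic because $0\in\Ecal_\theta$, so the supremum is never $-\infty$.
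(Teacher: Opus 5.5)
Your proof is correct and follows the paper's argument. Both rest on the central symmetry $\Ecal_\theta=-\Ecal_\theta$, which comes from $\|A(-e_\theta)\|_\infty=\|Ae_\theta\|_\infty$ and turns $\inf_{e_\theta\in\Ecal_\theta} d^\top e_\theta$ into $-h_{\Ecal_\theta}(d)$. Your explicit decomposition $w_d(\Ecal_\theta)=h_{\Ecal_\theta}(d)+h_{\Ecal_\theta}(-d)$ and your treatment of the rank-deficient case make precise points that the paper's two-line proof leaves implicit: when $d$ is not orthogonal to $\ker A$, both sides equal $+\infty$.
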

\begin{proof}

For any $e_\theta \in \mathcal{E}_\theta$,
\eqn{
\|Ae_\theta\|_\infty = \|-Ae_\theta\|_\infty =
\|A(-e_\theta)\|_\infty   \le 2\overline w.
}
Since $-e_\theta \in \mathcal{E}_\theta$ and $0 \in \mathcal{E}_\theta$, this implies $w_d(\Ecal_\theta) = \sup_{e_\theta \in \Ecal_\Theta} d^\top e_\theta - \inf_{e_\theta \in \Ecal_\Theta} d^\top e_\theta = 2\sup_{e_\theta \in \Ecal_\Theta} d^\top e_\theta.$
\end{proof}

Since $\|Ae_\theta\|_\infty \leq 2\overline w \iff -2\overline w\mathbf{1}_M \le A e_\theta \le 2\overline w\mathbf{1}_M$,
by \eqref{eq:width-is-2h} it suffices to compute
\eqn{
\label{eq:lp_eq}
\begin{aligned}
\max_{e_\theta\in\R^p}\;& d^\top e_\theta \\
\text{s.t.}\;& -2\overline w\mathbf{1}_M \le A e_\theta \le 2\overline w\mathbf{1}_M .
\end{aligned}
}
Now, we introduce multipliers $\lambda_1,\lambda_2\in\R^M_{\ge 0}$ and write the Lagrangian of the \eqref{eq:lp_eq} as $\Lcal(e_\theta,\lambda_1,\lambda_2)
= d^\top e_\theta + \lambda_1^\top(2\overline w\mathbf 1_M - Ae_\theta) + \lambda_2^\top(2\overline w\mathbf 1_M + Ae_\theta).$
The dual is finite iff $A^\top(\lambda_2-\lambda_1)=d$, giving
\eqn{
\begin{aligned}
h_{\Ecal_\theta}(d) 
&= \min_{\lambda_1,\lambda_2\ge 0} 
\;2\overline w\big(\mathbf 1_M^\top\lambda_1+\mathbf 1_M^\top\lambda_2\big) \\
&\text{s.t.}\quad A^\top(\lambda_2-\lambda_1)=d .
\end{aligned}
}
Let $\lambda_d=\lambda_2-\lambda_1$. Choosing 
$\lambda_2=(\lambda_d)_+$ and $\lambda_1=(\lambda_d)_-$ yields
\eqn{
\label{eq:prop_des_form}
h_{\Ecal_\theta}(d)=2\overline w \min_{\lambda_d:\,A^\top\lambda_d=d}\|\lambda_d\|_1 .
}
Therefore,
\eqn{
w_d\big(\Theta_{N_j}(\theta^\star)\big) 
\le \min\big(w_d(\Theta),\; 2h_{\Ecal_\theta}(d)\big).
}

If the executed closed-loop state-input trajectory coincides with the planned trajectory, then the predicted width upper bounds the width computed from the realized measurements, since the prediction is computed under worst-case bounded additive disturbances.

\subsection{Choice of Prediction Approach}
Both approaches can be used to predict uncertainty shrinkage induced by a
candidate policy segment. Approach~1 relies on closed-loop rollouts and
therefore captures the effects of tracking error and disturbances on the
resulting regression data. The rollouts are independent and can be
parallelized across samples. Approach~2 predicts uncertainty shrinkage
directly from the planned regression data and avoids forward simulation,
making it computationally cheaper. However, this prediction assumes that
the executed trajectory matches the planned trajectory and therefore does
not account for tracking error. In practice, Approach~1 is preferred when the effect of execution errors on
uncertainty reduction is important, while Approach~2 provides a faster
approximation when computational resources are limited.

\section{Tube MPC Instantiation of the Proposed Framework}
\label{sec:tube_mpc}
We now describe a tube MPC instantiation of the proposed framework for
finite-horizon, goal-directed navigation. In this instantiation, the backup
horizon $T_B$ is chosen adaptively and defined at each replanning time $t_k$ as
$T_B := t_f - t_k$, corresponding to the remaining mission duration to reach the
terminal goal set $\Gcal$. While the framework is formulated in terms of feedback
policies, for this instantiation we reason directly in terms of trajectories to simplify exposition. The resulting construction induce feedback policies and is fully consistent with~\cref{sec:framework}.

\subsection{Robust Backup Policy via Tube MPC}

At planning time $t_k$, the robust backup policy
$\pi_k^{\mathrm{rob},B}$ is generated using tube MPC. In this
instantiation, tube MPC is used to compute a nominal state-input
trajectory $\big(p_{k,x}^{\mathrm{rob}},\,p_{k,u}^{\mathrm{rob}}\big)$
together with a tube cross-section $\Ecal_k(t)$, defined as
\eqn{
\Ecal_k(t)
=
\Ecal\big(t;\Theta_k,\,\overline w,\,
p_{k,x}^{\mathrm{rob}}(t),\,p_{k,u}^{\mathrm{rob}}(t)\big),
}
which bounds the deviation between the true state and the nominal
trajectory for all $\theta\in\Theta_k$ and all admissible disturbances.
The resulting robust tube is
\eqn{
\Omega_k^{\mathrm{rob}}(t)
=
p_{k,x}^{\mathrm{rob}}(t)\oplus \Ecal_k(t)
=
\{\,x\in\Xcal : x-p_{k,x}^{\mathrm{rob}}(t)\in\Ecal_k(t)\,\}.
}

To enforce robust constraint satisfaction, the nominal trajectory is
required to satisfy tightened state, input, and terminal constraints
defined as
\begin{subequations}
\eqn{
\overline{\Scal}_k(t) = \Scal \ominus \Ecal_k(t),\quad
\overline{\Ucal}_k(t) = \Ucal \ominus \Delta \Ucal_k(t), \\
\overline{\Gcal}_k = \Gcal \ominus \Ecal_k(t_f),
}
\end{subequations}
where $\Delta \Ucal_k(t)$ bounds the input deviation induced by the
ancillary controller. Satisfaction of the tightened constraints by the
nominal trajectory implies satisfaction of the original constraints by
the closed-loop system.




The nominal trajectory is obtained by solving a finite-horizon optimal control
problem over the backup horizon $T_B = t_f - t_k$,
\begin{subequations}
\eqn{
\min_{p_x(\cdot),\,p_u(\cdot)} \quad &
\int_{t_k}^{t_k+T_B} \ell\big(p_x(t),p_u(t)\big)\,dt
+\ell_T\big(p_x(t_k+T_B)\big) \\
\text{s.t.}\quad &
\dot p_x(t)=f\left(p_x(t),\hat\theta_f\right)
+g\left(p_x(t),\hat\theta_g\right)p_u(t), \\
& p_x(t)\in \overline{\Scal}_k(t), \quad \forall t\in[t_k,t_k+T_B] \\ 
& p_u(t)\in \overline{\Ucal}_k(t), \quad \forall t\in[t_k,t_k+T_B], \\
& p_x(t_k+T_B)\in \overline{\Gcal}_k .
}
\end{subequations}

Given the nominal solution, an ancillary feedback law $\pi_k^{\mathrm{arc}} : \R \times \Xcal \to \Ucal$ is designed such that the control input
\eqn{
u(t)=p_{k,u}^{\mathrm{rob}}(t)+\pi_k^{\mathrm{arc}}(t, x(t))
}
renders $\Omega_k^{\mathrm{rob}}(t)$ forward invariant. The resulting feedback law defines the robust backup policy
\eqn{
\pi_k^{\mathrm{rob},B}(t,x)
=
p_{k,u}^{\mathrm{rob}}(t)+\pi_k^{\mathrm{arc}}(t, x(t)),
}
which guarantees $x(t)\in \Scal$, $u(t)\in \Ucal$, and $x(t_f)\in \Gcal$ for all admissible uncertainties when applied over $[t_k,t_k + T_B]$.
\begin{algorithm}[t]
\caption{Instantiation I of Proposed Framework}
\label{alg:tube_instantiation}
\footnotesize
\SetAlgoLined

\KwIn{Current state $x_k$, uncertainty set $\Theta^k$, exploration budget $B_{\mathrm{exp}}$}

\KwOut{Committed policy segment $\pi_k^{\mathrm{com}}$}

\BlankLine

Compute robust backup trajectory $(p^{\mathrm{rob}}_{k,x},p^{\mathrm{rob}}_{k,u})$ via tube MPC\;

Construct robust tube $\Omega_k^{\mathrm{rob}}(t)$ and backup policy $\pi_k^{\mathrm{rob},B}$\;

Generate candidate horizons $T_i^{c,k}=[t_k,\,t_k+iT_c]$\;

\For{$i=1,\dots,N_k$}{
    Define conservative candidate $\pi_{k,i}^{\mathrm{rob},B}$ as the restriction of $\pi_k^{\mathrm{rob},B}$\;

    Solve informative trajectory optimization to obtain $p_k^{\mathrm{info},i}$\;

    Attempt to construct tube $\Omega_k^{\mathrm{info},i}$ around $p_k^{\mathrm{info},i}$\;

    \eIf{a robust tube $\Omega_k^{\mathrm{info},i}$ exists}{
        mark candidate pair as valid\;
    }{
        reject candidate\;
    }
}

Predict uncertainty reduction $\Delta \xi_i$ using Algorithm~\ref{alg:uncert_pred}\;

Evaluate exploration cost $\Delta J_{\mathrm{exp}}^k(i)$\;

Form the feasible candidate set satisfying the exploration budget constraint\;

\eIf{feasible candidates exist}{
    select $i^\star=\arg\max s_i^{c,k}$\;

    commit informative candidate $\pi_{k,i^\star}^{\mathrm{rob},I}$\;
}{
    commit conservative segment $\pi_{k,1}^{\mathrm{rob},B}$\;
}

\Return $\pi_k^{\mathrm{com}}$\;

\end{algorithm}

\subsection{Predicted Mission Cost under Tube MPC}

Recall from~\cref{def:mission_cost} that the predicted mission cost of executing a
policy $\pi_k$ from state $x_k$ over the horizon
$\mathcal T_k=[t_k,t_f]$ is denoted by
$\mathcal J^{k\to f}(\pi_k,x_k,\mathcal T_k)$.

In the tube MPC instantiation, the predicted mission cost associated with the
robust backup policy $\pi_k^{\mathrm{rob},B}$ is defined as
\eqn{
\mathcal J^{k\to f}_{\mathrm{rob}}
\;=\;
\mathcal J^{k\to f}\!\left(\pi_k^{\mathrm{rob},B},x_k,\mathcal T_k\right),
}
and is evaluated using the nominal centerline trajectory returned by the tube MPC
optimization,
\[
\mathcal J^{k\to f}_{\mathrm{rob}}
=
\int_{t_k}^{t_f}
\ell\big(p_{k,x}^{\mathrm{rob}}(t),p_{k,u}^{\mathrm{rob}}(t)\big)\,dt
+\ell_T\big(p_{k,x}^{\mathrm{rob}}(t_f)\big).
\]

The mission cost is evaluated using the nominal centerline trajectory rather than
the executed closed-loop trajectory. While the ancillary feedback controller may
introduce tracking deviations to ensure tube invariance, these deviations do not
affect the feasibility or safety guarantees and are not explicitly accounted for
in the mission-level planning objective. The predicted mission cost therefore
serves as a consistent planning-time metric for enforcing the mission-level
budget constraint.

\begin{figure*}[t]
  \centering
  \includegraphics[width=2.05\columnwidth]{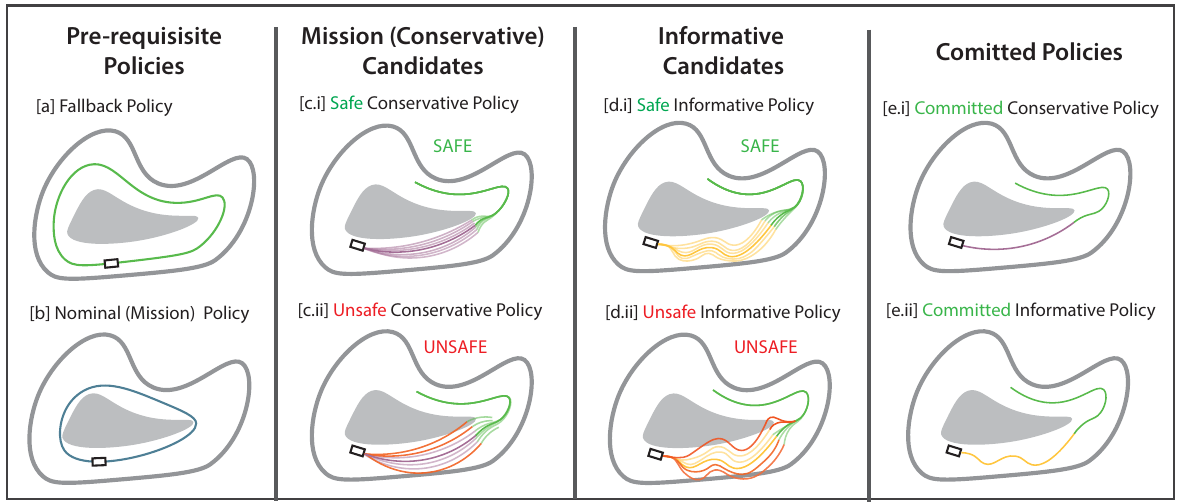}
  \caption{Illustration of the gatekeeper instantiation. Starting from prerequisite policies—a backup policy and a nominal mission policy—the framework generates conservative and informative candidate policies. Candidates that cannot be certified as safe are rejected. Among the remaining candidates, a safe conservative or safe informative policy may be committed for execution based on the score.}
  \vspace{-13pt}
  \label{fig:racing_Example}
\end{figure*}
\subsection{Conservative Candidate Policy Segment}

For each candidate horizon
\eqn{
\mathcal T_i^{c,k}=[t_k,\,t_k+iT_c],
}
the conservative candidate policy segment is defined exactly as in~\cref{def:conservative_segment} as the restriction of the robust backup policy:
\eqn{
\pi_{k,i}^{\mathrm{rob},B}
\;=\;
\pi_k^{\mathrm{rob},B}\big|_{\mathcal T_i^{c,k}} .
}
Equivalently, the associated conservative candidate trajectory $
p_k^{\mathrm{rob},i}
$
is obtained by restricting the robust tube trajectory to the same horizon. No additional optimization is performed at this step.

\subsection{Informative Candidate Trajectory Generation}

For each horizon $\mathcal T_i^{c,k}$, we compute an informative candidate trajectory
$
p_k^{\mathrm{info},i}
=
\big(p_{k,x}^{\mathrm{info},i},\,p_{k,u}^{\mathrm{info},i}\big)
$
by solving
\begin{subequations}
\label{opt_cand_prob}
\begin{align}
\min_{x(\cdot),u(\cdot)} \quad &
\int_{t_k}^{t_k+iT_c} \ell\big(x(\tau),u(\tau)\big)\,d\tau
\;-\;
\gamma \log\det\!\big(\mathcal I_{k,i}+\epsilon I\big)
\label{opt_cand_prob_obj}
\\
\text{s.t.}\quad
& \dot x(t)=f(x(t),\hat\theta_f)+g(x(t),\hat\theta_g)u(t),
\\
& x(t_k)=x_k,
\\
& x(t_k+iT_c)
=
p_{k,x}^{\mathrm{rob},i}(t_k+iT_c).
\end{align}
\end{subequations}

The information matrix associated with candidate $i$ is defined as
\begin{equation}
\mathcal I_{k,i}
:=
\int_{t_k}^{t_k+iT_c}
\Phi(x(\tau),u(\tau))^\top
W_\theta
\Phi(x(\tau),u(\tau))\,d\tau,
\end{equation}
where $W_\theta \in \mathbb{S}_{++}$ is a user-selected weighting matrix, $\gamma \in \mathbb{R}_{>0}$ is the weight on the information reward, and $\epsilon \in \mathbb{R}_{>0}$ is a small regularization constant. The terminal constraint enforces the terminal recoverability condition of~\cref{def:informative_segment} by requiring that the informative trajectory rejoins the conservative candidate induced by the robust backup policy.

\begin{remark}
 Robustness to bounded uncertainties and satisfaction of safety constraints are not explicitly enforced during informative candidate generation. After computing $p_k^{\mathrm{info},i}$, safety is assessed by attempting to construct a tube $\Omega_k^{\mathrm{info},i}$ around the informative trajectory. If such a tube exists, the informative candidate admits a robust realization and is referred to as a \emph{safe informative candidate}; otherwise, it is rejected.   
\end{remark}

\subsection{Candidate Policy Segment Pair and Validity}

The informative policy segment $\pi_{k,i}^I$, and the conservative candidate policy segment $\pi_{k,i}^{\mathrm{rob},B}$ together form the candidate policy segment pair
$(\pi_{k,i}^I,\;\pi_{k,i}^{\mathrm{rob},B})$,
as defined in~\cref{def:candidate_pair}. A candidate pair is said to be \textit{valid} if there exists a corresponding robust informative policy segment $\pi_{k,i}^{\mathrm{rob},I}$ (\cref{def:valid_pair}) whose closed-loop execution satisfies state and input constraints for all admissible uncertainties over $\mathcal T_i^{c,k}$. Among all valid candidate pairs, the framework selects a candidate that satisfies the budget constraint based on the predicted mission cost. If no informative candidate is valid and budget-feasible, the system commits to the robust backup policy.


\section{\gatekeeper{} (Safety Filter) Instantiation of the Proposed Framework}
\label{sec:gatekeeper_robust}

We now present an alternative instantiation of the proposed framework based on the \gatekeeper{} architecture. In contrast to the tube MPC instantiation in \cref{sec:tube_mpc}, which enforces safety through robust trajectory planning and tube invariance, this instantiation separates nominal trajectory generation from safety certification. Safety is enforced by a \gatekeeper{} module that evaluates candidate policies through forward simulation under admissible uncertainty realizations and executes a policy only if it satisfies the required safety conditions with high confidence. This instantiation is particularly suitable for systems where high-performance planners are available but do not explicitly account for model uncertainty.

Fig.~\ref{fig:racing_Example} illustrates the policies considered in the \gatekeeper{} instantiation for the autonomous car racing example. The process begins with a fallback policy that provides a conservative safe behavior robust to the initial uncertainty set. Since the uncertainty set is refined online through set-membership updates and only shrinks over time, this fallback policy remains safe throughout execution. In the racing scenario, it may correspond to a slow trajectory that follows the track centerline with large safety margins. A nominal mission trajectory is then generated by a high-performance planner, and an informative trajectory is generated to reduce parametric uncertainty. Unlike in the tube-based instantiation, these trajectories are not robust by construction. Because they are computed using the current parameter estimate, they may violate safety constraints when the true parameter differs from the estimate. Their safety must therefore be verified before execution using \gatekeeper{}.

For each candidate horizon, the framework constructs two candidate policies by restricting the nominal mission and informative trajectories to that horizon. Each policy applies its corresponding candidate segment over the candidate horizon and then switches to the fallback policy thereafter. The \gatekeeper{} evaluates each policy under the current uncertainty set to determine whether it can be safely executed. Among the candidate policies that satisfy the safety and budget constraints, the framework selects the one with the highest score for execution. Thus, safety is enforced at the policy-verification stage rather than during nominal trajectory generation.

\subsection{Nominal Mission Policy via MPC}

At planning time $t_k$, a nominal trajectory is generated using a standard
model predictive controller that does not explicitly account for parametric
uncertainty. In particular, the MPC planner computes a nominal state–input
trajectory
\[
\big(p_{k,x}^{\mathrm{nom}},\,p_{k,u}^{\mathrm{nom}}\big)
\]
over the horizon $T_B$ by solving the finite-horizon optimal control
problem
\begin{subequations}
\eqn{
\min_{p_x(\cdot),\,p_u(\cdot)} \quad &
\int_{t_k}^{t_k+T_B} \ell\big(p_x(t),p_u(t)\big)\,dt
+\ell_T\big(p_x(t_k+T_B)\big) \\
\text{s.t.}\quad &
\dot p_x(t)=f\!\left(p_x(t),\hat\theta_f\right)
+g\!\left(p_x(t),\hat\theta_g\right)p_u(t), \\
& p_x(t_k)=x_k, \\
& p_x(t)\in \Scal,\quad \forall t\in[t_k,t_k+T_B], \\
& p_u(t)\in \Ucal,\quad \forall t\in[t_k,t_k+T_B].
}
\end{subequations}

Because the nominal trajectory is computed using the parameter estimate
$\hat\theta$, it may violate safety constraints when the true parameter
$\theta$ differs from the estimate. Safety is therefore enforced by the
\gatekeeper{} verification mechanism described next.





















\begin{algorithm}[t]
\caption{Instantiation II of Proposed Framework}
\label{alg:gatekeeper_instantiation}
\footnotesize
\SetAlgoLined

\KwIn{Current state $x_k$, uncertainty set $\Theta_k$, disturbance set $\Wcal$, exploration budget $B_{\mathrm{exp}}$}

\KwOut{Committed policy $\pi_k^{\mathrm{com}}$}

\BlankLine

Compute nominal MPC trajectory $(p^{\mathrm{nom}}_{k,x},p^{\mathrm{nom}}_{k,u})$\;

Generate informative trajectory $(p^{\mathrm{info}}_{k,x},p^{\mathrm{info}}_{k,u})$\;

Generate candidate horizons $\mathcal T_i^{c,k}=[t_k,\,t_k+iT_c]$\;

\For{$i=1,\dots,N_k$}{

Define nominal candidate segment $p_k^{\mathrm{nom},i}$ as the restriction of the nominal trajectory to $\mathcal T_i^{c,k}$\;

Define informative candidate segment $p_k^{\mathrm{info},i}$ as the restriction of the informative trajectory to $\mathcal T_i^{c,k}$\;

Construct candidate policies $\pi_{k,i}^{\mathrm{nom}}$ and $\pi_{k,i}^{\mathrm{info}}$ by following the corresponding candidate segment and then the fallback policy\;

Evaluate safety of $\pi_{k,i}^{\mathrm{nom}}$ and $\pi_{k,i}^{\mathrm{info}}$ using $N$ rollouts under sampled $(\theta,w)$\;

Compute empirical safety probabilities
$P_{\mathrm{safe}}(\pi_{k,i}^{\mathrm{nom}})$ and
$P_{\mathrm{safe}}(\pi_{k,i}^{\mathrm{info}})$\;

Estimate predicted mission cost from the rollout simulations\;

Predict uncertainty reduction $\Delta \xi_i$ using Algorithm~\ref{alg:uncert_pred}\;

Compute the score of each safe candidate satisfying the exploration budget constraint\;
}

Form the feasible candidate set satisfying the exploration budget and safety constraint\;

\eIf{feasible candidates exist}{
select the candidate with the highest score and commit it as
$\pi_k^{\mathrm{com}}$\;
}{
set $\pi_k^{\mathrm{com}} = \pi_{k-1}^{\mathrm{com}}$\;
}

\Return $\pi_k^{\mathrm{com}}$\;

\end{algorithm}

\subsection{Informative Trajectory Generation}

At planning time $t_k$, the informative trajectory is generated 
by solving an information-seeking control problem using the current
parameter estimate $\hat{\theta}$. Specifically, the informative trajectory
\[
\big(p_{k,x}^{\mathrm{info}},\,p_{k,u}^{\mathrm{info}}\big)
\]
is obtained by solving
\begin{subequations}
\label{opt_info_candidate}
\begin{align}
\min_{x(\cdot),u(\cdot)} \quad &
\int_{t_k}^{t_k+T_B} \ell\big(x(\tau),u(\tau)\big)\,d\tau
\;-\;
\gamma \log\det\!\big(\mathcal I_k+\epsilon I\big)
\\
\text{s.t.}\quad
& \dot x(t)=f(x(t),\hat\theta_f)+g(x(t),\hat\theta_g)u(t),
\\
& x(t_k)=x_k .
\end{align}
\end{subequations}

The associated information matrix is defined as
\begin{equation}
\label{eqn:gatekeeper_info_obj}
\mathcal I_k
:=
\int_{t_k}^{t_k+T_B}
\Phi(x(\tau),u(\tau))^\top
W_\theta
\Phi(x(\tau),u(\tau))\,d\tau,
\end{equation}
where $\Phi(x,u)$ is the regressor defined in \eqref{eqn:regressor},
$W_\theta \in \mathbb{S}_{++}$ is a user-selected weighting matrix,
$\gamma \in \mathbb{R}_{>0}$ weights the information-seeking objective,
and $\epsilon \in \mathbb{R}_{>0}$ is a small regularization constant.

\subsection{Candidate Policy Segments}

For each candidate horizon
\eqn{
\mathcal T_i^{c,k}=[t_k,\,t_k+iT_c],
}
the framework constructs two candidate policies by applying a candidate
segment over $\mathcal T_i^{c,k}$ and then following the fallback policy
for all future time.

The first candidate segment is obtained by restricting the nominal mission
trajectory to the horizon $\mathcal T_i^{c,k}$:
\eqn{
p_k^{\mathrm{nom},i}
=
p_k^{\mathrm{nom}}\big|_{\mathcal T_i^{c,k}}.
}

The second candidate segment is obtained by restricting the informative
trajectory to the same horizon:
\eqn{
p_k^{\mathrm{info},i}
=
p_k^{\mathrm{info}}\big|_{\mathcal T_i^{c,k}}.
}

The corresponding candidate policies are then defined as follows:
$\pi_{k,i}^{\mathrm{nom}}$ applies the nominal candidate segment
$p_k^{\mathrm{nom},i}$ over $\mathcal T_i^{c,k}$ and then follows the
fallback policy for all future time, while
$\pi_{k,i}^{\mathrm{info}}$ applies the informative candidate segment
$p_k^{\mathrm{info},i}$ over $\mathcal T_i^{c,k}$ and then follows the
same fallback policy for all future time.

The overall construction is illustrated in \cref{fig:racing_Example}. In
particular, panels [c.i] and [c.ii] show safe and unsafe conservative
(nominal) candidate policies, respectively, while panels [d.i] and [d.ii]
show safe and unsafe informative candidate policies, respectively.

\subsection{Robust \gatekeeper{} Safety Verification}

Given a candidate policy
\[
\pi_{k,i}\in\{\pi_{k,i}^{\mathrm{nom}},\pi_{k,i}^{\mathrm{info}}\},
\]
the \gatekeeper{} determines whether it can be safely executed under the
current uncertainty set $\Theta_k$. Although each candidate policy is
defined over an infinite horizon, its safety is assessed over a finite
verification horizon.

For a candidate horizon
\eqn{
\mathcal T_i^{c,k}=[t_k,\,t_k+iT_c],
}
we define a fallback horizon of duration $T_{\mathrm{fb}}>0$ and evaluate
the resulting trajectory over
\eqn{
\mathcal T_i^{v,k}
=
[t_k,\,t_k+iT_c+T_{\mathrm{fb}}].
}
Over this interval, the system first follows the candidate segment induced
by $\pi_{k,i}$ over $\mathcal T_i^{c,k}$ and then follows the fallback
policy over the remaining interval
\eqn{
[t_k+iT_c,\,t_k+iT_c+T_{\mathrm{fb}}].
}

To assess safety, the \gatekeeper{} performs $N$ forward simulations of the
closed-loop system under sampled realizations of the uncertain parameters
and disturbances:
\eqn{
\dot x(t)=f(x(t),\theta_f)+g(x(t),\theta_g)u(t)+w(t),
}
where $\theta\sim\mathcal U(\Theta_k)$ and $w(t)\sim\mathcal U(\Wcal)$.
Each rollout is initialized at the current state $x_k$ and applies the
control induced by the candidate policy over the verification horizon
$\mathcal T_i^{v,k}$.

Let $\mathcal X_{\mathrm{fb}}\subseteq\Scal$ denote the fallback set. A
rollout is declared safe if all state and input constraints are satisfied
over $\mathcal T_i^{v,k}$ and the terminal state at the end of the fallback
horizon satisfies
\eqn{
x(t_k+iT_c+T_{\mathrm{fb}})\in \mathcal X_{\mathrm{fb}}.
}
This terminal fallback-set $\mathcal X_{\mathrm{fb}}$ plays the role of the backup-set condition in the standard \gatekeeper{} formulation \cite{agrawal2024gatekeeper}.

Let $\mathcal S^{(\ell)}(\pi_{k,i})\in\{0,1\}$ denote the safety indicator
for rollout $\ell$, where $\mathcal S^{(\ell)}(\pi_{k,i})=1$ if the above
conditions are satisfied, and $\mathcal S^{(\ell)}(\pi_{k,i})=0$
otherwise. The empirical safety probability of the candidate policy is then
defined as
\eqn{
P_{\mathrm{safe}}(\pi_{k,i})
=
\frac{1}{N}\sum_{\ell=1}^{N}\mathcal S^{(\ell)}(\pi_{k,i}).
}

The candidate policy is declared safe if
\eqn{
P_{\mathrm{safe}}(\pi_{k,i})\ge 1-\delta,
}
where $\delta\in(0,1)$ is a user-specified risk tolerance. If this
condition is satisfied, the candidate may be committed for execution;
otherwise, it is rejected by the \gatekeeper{}. In the racing instantiation, the fallback set is chosen as a neighborhood
of the track centerline. Here, we use empirical safety probability as the acceptance criterion, but other risk measures \cite{majumdar2019should},
such as conditional value-at-risk (CVaR), could also be used to quantify
rollout risk.

\subsection{Valid Candidate Policy Segment Pairs}
\label{sec:gatekeeper_valid_pairs}

Having defined the \gatekeeper{} safety verification procedure, we now specify validity in this instantiation. For each candidate horizon $\mathcal T_i^{c,k}$, the framework constructs a nominal candidate policy $\pi_{k,i}^{\mathrm{nom}}$ and an informative candidate policy $\pi_{k,i}^{\mathrm{info}}$. The pair is declared valid if both policies are certified safe by \gatekeeper{} under the current uncertainty set $\Theta_k$.

In the notation of \textit{Def}.~\ref{def:valid_pair}, $\pi_{k,i}^{\mathrm{nom}}$ corresponds to the conservative policy segment $\pi_{k,i}^{\mathrm{rob},B}$, while a certified-safe informative policy $\pi_{k,i}^{\mathrm{info}}$ corresponds to the robustified informative policy segment $\pi_{k,i}^{\mathrm{rob},I}$. Accordingly, a valid candidate policy segment pair in this instantiation is written as
\[
\big(\pi_{k,i}^{\mathrm{rob},I},\,\pi_{k,i}^{\mathrm{rob},B}\big)
=
\big(\pi_{k,i}^{\mathrm{info}},\,\pi_{k,i}^{\mathrm{nom}}\big),
\]
only when both $\pi_{k,i}^{\mathrm{info}}$ and $\pi_{k,i}^{\mathrm{nom}}$ are certified safe by \gatekeeper{}. Once validity is established, budget feasibility is then evaluated.

\section{Results \& Discussion}\label{sec:results}






\begin{figure*}[t]
  \centering
  \includegraphics[width=2.05\columnwidth]{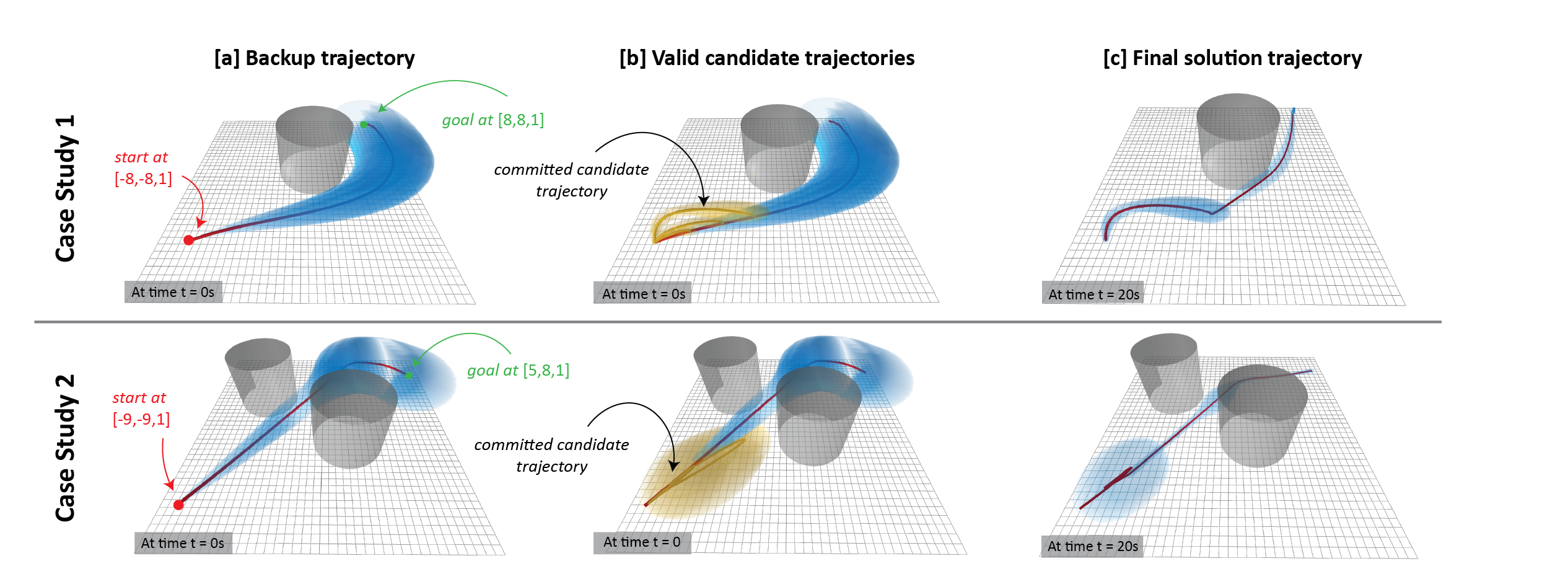}
  \caption{Backup, candidate, and final solution trajectories for Case Study~1 (top) and Case Study~2 (bottom).}
  \vspace{-13pt}
  \label{fig:trajectories}
\end{figure*}

We validate the proposed framework through two case studies that illustrate two different safety instantiations: 
(a) quadrotor navigation and (b) autonomous car racing. 
The quadrotor navigation example demonstrates the tube MPC instantiation described in \cref{sec:tube_mpc}, while the autonomous car racing example illustrates the gatekeeper-based safety filtering instantiation. 
In both cases, the framework evaluates conservative and informative candidate trajectories and commits a trajectory only when it satisfies the safety and budget constraints while providing the highest predicted uncertainty reduction.

\begin{figure*}[t]
  \centering
  \includegraphics[width=2.05\columnwidth]{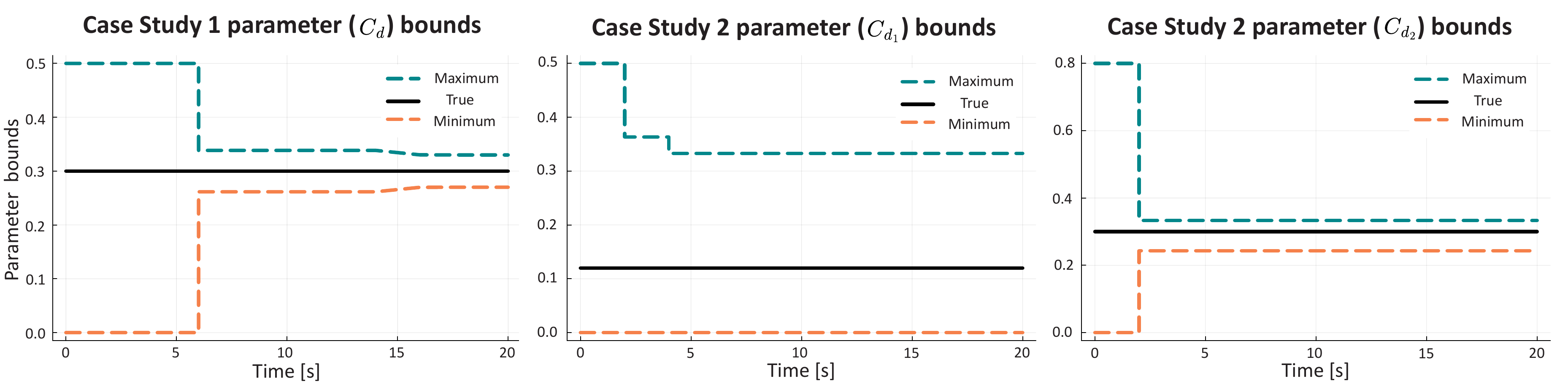}
  \caption{Parameter bound evolution are shown for two studies: Case Study~1 ($C_d$, left) and Case Study~2 ($C_{d_1}$, middle; $C_{d_2}$, right).}
  \vspace{-13pt}
  \label{fig:parameter_bounds}
\end{figure*}
\begin{table}[ht]
\centering
\scriptsize
\begin{tabular}{c|c|c|c|c|c}
\hline
\multirow{2}{*}{\textbf{System}} &
\multirow{2}{*}{\textbf{Method}} &
\multirow{2}{*}{\makecell{\textbf{Budget}\\\textbf{(\%)}}} &
\multirow{2}{*}{\makecell{\textbf{Total Cost}\\\textbf{(\%)}}} &
\multicolumn{2}{c}{\textbf{Uncertainty Reduction (\%)}} \\
\cline{5-6}
 &  &  &  & \textbf{Param 1} & \textbf{Param 2} \\
\hline\hline
\multirow{2}{*}{1}
& Baseline\cite{lopez2019dynamic_robust2} & --    & 100.0 & 0  & -- \\
& Proposed   & 110.0 & 82.5  & 88.0 & -- \\
\hline
\multirow{2}{*}{2}
& Baseline \cite{lopez2019dynamic_robust2} & --    & 100.0 & 0  & 0 \\
& Proposed   & 110.0 & 81.3  & 34.0 & 88.8 \\
\hline
\end{tabular}
\caption{Budget and cost are shown as percentages relative to the baseline solution (100\%). 
For System 2, reductions are reported per parameter dimension.}
\label{tab:comparison}
\end{table}

\subsection{Safe Quadrotor Navigation}
We validate the framework on a quadrotor navigation task. Backup trajectories are generated with tube MPC \cite{lopez2019dynamic_robust2} and tracked using a sliding mode controller. 
The cost functional,
\eqn{
    J(p_x,p_u) \;=\; \int_{t_0}^{t_f} \big( \alpha \|u(t)\|^2 + \beta \|p_x(t)-r_{\text{goal}}\|^2 \big)\,dt,
}
penalizes control effort and deviation from the goal, where $\alpha,\beta>0$ are weights, $p_x(t)$ the nominal state, $u(t)$ the input, and $r_{\text{goal}}$ the goal. 
In both cases we set $T_C=2.0$s.

\subsubsection{\textbf{Case Study 1: Quadrotor with Drag}}
The first case study, illustrated in Fig.~\ref{fig:trajectories}, considers a quadrotor modeled as a double integrator with nonlinear aerodynamic drag,
\eqn{
    \ddot{r} = -C_d \|\dot{r}\| \dot{r} + g + u + d,
    \label{eq:drag_dynamics}
}
where $r \in \R^3$ is the inertial position, $g \in \R^3$ the gravitational acceleration, $u \in \R^3$ the control input, $d \in \R^3$ an additive disturbance, and $C_d \in \R$ the unknown drag coefficient. The measurement $y \in \R^3$ corresponds to $r$.  

As shown in Fig.~\ref{fig:parameter_bounds}, committing a $6$-second informative trajectory reduced the admissible interval for $C_d$, tightening the bounds around the true parameter and demonstrating the framework’s ability to shrink parametric uncertainty online.

Table~\ref{tab:comparison} reports the corresponding mission cost. Relative to the conservative baseline (set to $100\%$), the proposed method achieved only $82.5\%$ of the cost, while remaining within the budget of $110\%$. Thus, the approach not only reduced parameter uncertainty but also improved overall efficiency compared to the baseline backup solution.

\subsubsection{\textbf{Case Study 2: Quadrotor with Vector Drag}}
We extend the setup of Case Study~1 by considering a quadrotor with vector drag dynamics:
\begin{equation}\label{eq:quad_vec_drag}
    \ddot r = -C_{d_1}\dot r - C_{d_2}\,\|\dot r\|\,\dot r + g + u + d,
\end{equation}
where $C_{d1},C_{d2}\in\R$ are the unknown drag coefficients.  

In this case, the robot executed informative trajectories that reduced the parameter set in both directions, tightening the bounds from $[0.0,\,0.50]\times[0.0,\,0.80]$ to $[0.0,\,0.33]\times[0.25,\,0.34]$. 
The asymmetric shrinkage reflects the relative excitation of the regressors: more data were collected along $C_{d2}$, yielding a stronger contraction of its bounds. 
Since candidate generation did not enforce safety, and the robot operated in a narrow corridor, many informative candidates were invalidated. 
Consequently, fewer safe informative trajectories were committed, producing only modest reduction in $C_{d1}$ compared to $C_{d2}$. 
The total cost was $81.3\%$, well below the $110\%$ budget, normalized to the $100\%$ baseline.
\begin{figure*}[t]
  \centering
  \includegraphics[width=2.05\columnwidth]{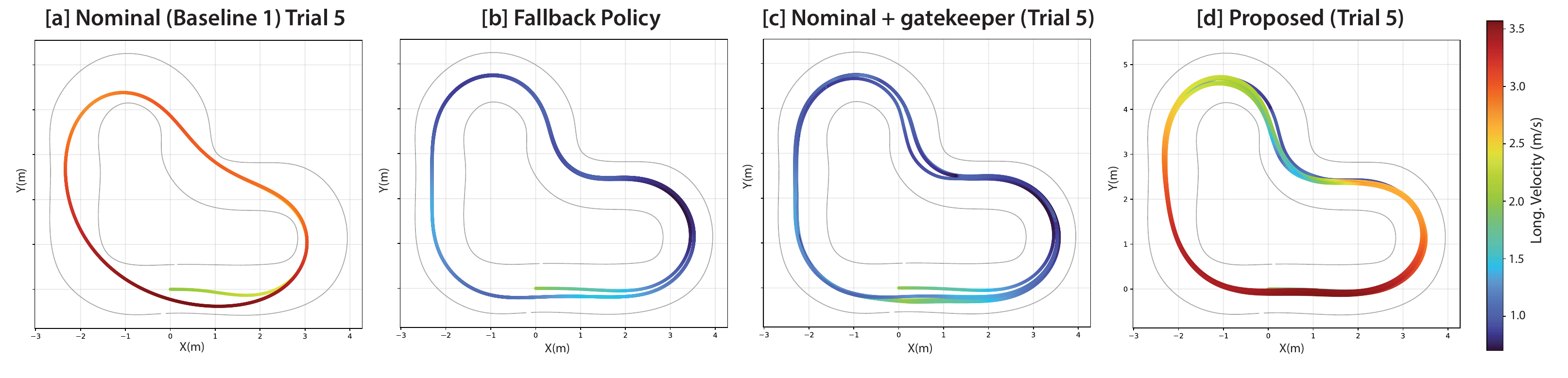}
  \caption{Trajectories over the last 5 laps for each method. For (a), the successful Trial 5 run is shown. Additional details on the trials can be found in Table~\ref{tab:racing_trials}.}
  \vspace{-13pt}
  \label{fig:racing_final_trajectories}
\end{figure*}

\begin{table*}[t]
\centering
\small
\begin{tabular}{l|c|c|c|c|c}
\hline
\textbf{Method} &
\makecell{\textbf{Safe Runs}\\\textbf{(\%)}} &
\makecell{\textbf{Average}\\\textbf{Lap Time (s)}} &
\makecell{\textbf{Best Final}\\\textbf{Lap Time (s)}} &
\makecell{\boldmath$\mu$ \textbf{Uncertainty}\\\textbf{Reduction (\%)}} &
\makecell{\textbf{Budget}\\\textbf{Consumed (\%)}} \\
\hline\hline
$[1]$ Nominal \cite{xue2024learning} (No Safety) & 30  & 5.01 & 4.92 & --   & -- \\
$[2]$ Weighted Nominal--Informative (No Safety) & 20  & 7.87 & 7.52 & --   & -- \\
$[3]$ Fallback Only & 100 & 17.0 & 16.8 & --   & -- \\
$[4]$ Nominal + \gatekeeper & 100 & 15.7 & 12.5 & --   & -- \\
$[5]$ $[2]$ + \gatekeeper & 80 & 21.6 & 17.5 & 96.2 & -- \\
Proposed: \dualgatekeeper{} & 100 & 8.07 & 6.48 & 95.5& 23.8 \\
\hline
\end{tabular}
\caption{Performance comparison for the autonomous racing task over 10 independent trials per method. Each trial is executed for at most 10 laps. A trial is counted as a Safe Run only if all 10 laps are completed without any safety violation. Safe Runs reports the percentage of such trials. Average Lap Time is computed over all completed laps, and Final Lap Time denotes the lap time achieved at the end of each trial.}
\label{tab:racing_overall_table}
\end{table*}
\subsubsection{Implementation details}

In practice, generating a robust tube trajectory based on the method described in \cite{lopez2019dynamic_robust2} requires solving a nonlinear optimization problem and can take anywhere from 100 milliseconds to one second, depending on the problem size and solver warm-start conditions. The generation of an informative candidate trajectory is typically faster, on the order of a few hundred milliseconds, though this is also highly dependent on the system dynamics and horizon length. Both the backup and candidate trajectory optimization problems are implemented using InfiniteOpt.jl, which provides a flexible framework for modeling dynamic optimization problems in Julia.

\subsection{Safe Autonomous Car Racing}

We next evaluate the \gatekeeper{}-based instantiation of the framework on an autonomous car racing task. In this case study, the objective is to complete a lap safely while actively reducing parametric uncertainty in the tire--road interaction model. The racing dynamics are given in Appendix~\ref{sec:appex_car_dyn}, with the tire friction coefficient $\mu$ modeled as an unknown but bounded parameter. A linear-in-parameter representation of the dynamics is provided in Appendix~\ref{sec:appex_car_LIP}. The nominal mission trajectory is computed using a linearized MPC solved via sequential quadratic programming (SQP), with an average solve time of approximately $10\mathrm{ms}$. Our implementation builds on the open-source racing MPC framework of \cite{xue2024learning}. Informative trajectories are generated using model predictive path integral control (MPPI) implemented in JAX, with an average solve time of approximately $15\mathrm{ms}$.

The uncertainty in $\mu$ directly affects the vehicle’s lateral force generation and therefore has a significant impact on both safety and performance during aggressive racing maneuvers. Reducing this uncertainty enables less conservative behavior and improved lap performance. Additional details on the racing model, controller design, and simulation setup are deferred to Appendix~\ref{sec:appex_car}.

The gatekeeper instantiation uses three policy components: a fallback policy, a nominal mission policy, and an informative policy. The fallback policy provides conservative safe behavior under uncertainty, the nominal policy generates high-performance racing behavior aimed at minimizing lap time, and the informative policy excites the dynamics to reduce uncertainty in the tire friction coefficient $\mu$.
We test the following baselines in this paper.
\begin{itemize}
\item \textbf{Baseline 1:}
Executes only the nominal controller.
\item \textbf{Baseline 2:}  
Optimizes a weighted objective of lap time and uncertainty reduction without safety guarantees.

\item \textbf{Baseline 3:}  
Executes only the fallback policy.

\item \textbf{Baseline 4:}  
\gatekeeper{} evaluating only nominal segments with fallback; no informative candidates.

\item \textbf{Baseline 5:}  
\gatekeeper{} evaluating candidates from a weighted objective; no pure nominal candidates.

\item \textbf{Proposed framework.}  
Generates both nominal and informative candidates and commits the best feasible one.
\end{itemize}

\begin{table*}[t]
\centering
\small
\begin{tabular}{c|c|c|c|c|c|c|c}
\hline
\textbf{Trial} & 
\boldmath$\mu_{\text{planned}}$ & 
\boldmath$\mu_{\text{true}}$ &
\multicolumn{5}{c}{\textbf{Success (First, Last Lap Time over 10 Laps)}} \\
\cline{4-8}
 & & &
\textbf{Baseline 1} & \textbf{Baseline 2} & \textbf{Baseline 4} & \textbf{Baseline 5} & \textbf{Proposed} \\
\hline\hline
1  & 0.28 & 0.90 & $\textcolor{red}{\times}$ & $\textcolor{red}{\times}$ & \checkmark\,(12.7,\,12.9) & \checkmark\,(20.6,\,18.6)  & \checkmark\,(17.4,\,7.60) \\
2  & 0.47 & 0.90 & $\textcolor{red}{\times}$ & $\textcolor{red}{\times}$ & \checkmark\,(13.9,\,13.4) & $\textcolor{red}{\times}$ & \checkmark\,(17.2,\,7.32) \\
3  & 0.64 & 0.90 & $\textcolor{red}{\times}$ & $\textcolor{red}{\times}$ & \checkmark\,(13.4,\,15.4) & \checkmark\,(21.6,\,19.1) & \checkmark\,(17.7,\,6.89) \\
4  & 0.81 & 0.90 & \checkmark\,(5.58,\,4.92) & $\textcolor{red}{\times}$ & \checkmark\,(15.7,\,16.0) & \checkmark\,(21.6,\,18.3) & \checkmark\,(16.9,\,7.20) \\
5  & 0.90 & 0.90 & \checkmark\,(5.54,\,5.01) & \checkmark\,(8.44,\,7.52) & \checkmark\,(15.4,\,14.7) & \checkmark\,(23.8,\,17.7) & \checkmark\,(17.1,\,6.52) \\
6  & 1.12 & 0.90 & \checkmark\,(5.58,\,5.10) & \checkmark\,(8.06,\,7.90) & \checkmark\,(17.6,\,15.8) & \checkmark\,(20.6,\,18.6) & \checkmark\,(17.3,\,6.57) \\
7  & 1.36 & 0.90 & $\textcolor{red}{\times}$  & $\textcolor{red}{\times}$ & \checkmark\,(16.9,\,17.1) & \checkmark\,(21.4,\,18.4) & \checkmark\,(17.7,\,7.22) \\
8  & 1.58 & 0.90 & $\textcolor{red}{\times}$           & $\textcolor{red}{\times}$ & \checkmark\,(17.2,\,17.3) & $\textcolor{red}{\times}$ & \checkmark\,(17.5,\,7.21) \\
9  & 1.73 & 0.90 & $\textcolor{red}{\times}$           & $\textcolor{red}{\times}$ & \checkmark\,(17.0,\,17.1) & \checkmark\,(21.2,\,18.4) & \checkmark\,(18.2,\,7.72) \\
10 & 1.95 & 0.90 & $\textcolor{red}{\times}$           & $\textcolor{red}{\times}$ & \checkmark\,(17.2,\,17.0) & \checkmark\,(23.7,\,18.9) & \checkmark\,(17.9,\,7.32) \\
\hline
\end{tabular}
\caption{Trial-wise outcomes with $\mu_{\text{planned}} \sim \mathcal{U}(0.2, 2.0)$ and fixed true friction $\mu_{\text{true}} = 0.90$. A checkmark denotes completion of 10 laps without collision (with first and last lap times in seconds), while $\times$ denotes failure.}
\label{tab:racing_trials}
\end{table*}

The results are summarized in Table~\ref{tab:racing_overall_table}, with detailed trial-wise outcomes provided in Table~\ref{tab:racing_trials}.

First, Baselines 1 and 2 exhibit poor safety performance, completing only $30\%$ and $20\%$ of trials successfully, respectively. As shown in Table~\ref{tab:racing_trials}, failures occur across a wide range of $\mu_{\text{planned}}$, indicating that both purely nominal and weighted exploration strategies are unable to maintain safety under model mismatch.

In contrast, the \gatekeeper-based methods substantially improve safety, but the rollout-based \gatekeeper{} test remains probabilistic and therefore can still admit trajectories that violate constraints under realizations not captured during verification. This is evident in the results: Baseline 5 achieves only $80\%$ safe runs despite using the gatekeeper. Baseline 3 (fallback only) is highly conservative, resulting in significantly larger lap times. While Baseline 4 permits nominal candidate trajectories that pass the \gatekeeper{} test under the current uncertainty set, it does not reduce uncertainty and therefore remains limited by conservativeness.

Baseline 5 reduces uncertainty ($96.2\%$), but its reliance on a weighted objective leads to overly aggressive informative behavior. In particular, a high weight on the information term causes the trajectory to deviate significantly from the racing line, after which the system repeatedly returns to the fallback policy. This results in degraded performance despite successful uncertainty reduction.

The proposed \dualgatekeeper{} achieves both safety and strong performance, reducing the average lap time to $8.07$\,s and achieving $95.5\%$ uncertainty reduction (Table~\ref{tab:racing_overall_table}). From Table~\ref{tab:racing_trials}, we observe that after initial laps with higher times, the final lap times consistently decrease to approximately $6.5$--$7.7$\,s, indicating that uncertainty reduction enables progressively less conservative behavior.

Notably, only $23.8\%$ of the exploration budget is utilized, suggesting that a small number of informative trajectories are sufficient to significantly reduce uncertainty. Once the uncertainty is reduced, the framework predominantly commits high-performance trajectories, leading to improved lap times.

Finally, performance remains influenced by the fallback policy, which constrains transitions between trajectories. In particular, even when nominal trajectories are safe, transitioning to the fallback policy may require conservative behavior. Improving the fallback design could further enhance performance.

\section{Limitations and Future Work}
First, the framework assumes that the unknown parameters are time-invariant. The corresponding feasible parameter set is updated over time via SMID and shrinks as informative data are incorporated. While standard in set-membership identification, this assumption is restrictive in scenarios with time-varying or drifting parameters (e.g., changing friction or wind). In such cases, the feasible set may not contract monotonically and previously collected data may become inconsistent. Future work will focus on extending the framework to handle time-varying parameters, for example through adaptive or forgetting-based set updates.

Second, the current framework relies on a prescribed exploration budget to determine whether informative trajectories can be executed. In the limiting case where the budget is zero, only the conservative robust policy is executed. However, this limitation arises from the absence of a principled way to quantify the impact of parametric uncertainty on the future mission cost. While reducing uncertainty intuitively leads to less conservative behavior and improved performance, the current framework does not provide a provable relationship between the size of the uncertainty set and the resulting robust cost. Future work will focus on developing methods that explicitly characterize how uncertainty affects future mission cost, enabling certificates that guarantee when uncertainty reduction leads to a net performance improvement, even under zero exploration budget.

Third, budget feasibility is enforced using predicted cost, which preserves the validity of the framework. The prediction of uncertainty reduction is based on planned trajectories and does not account for discrepancies between planned and executed trajectories due to tracking error, disturbances, or model mismatch. As a result, the realized uncertainty reduction may differ from the predicted value used in candidate selection. Future work will focus on execution-aware uncertainty prediction and improved alignment between predicted and realized outcomes.

\section{Conclusion}
We presented a dual control framework that integrates safety, budget feasibility, and active uncertainty reduction within a unified decision-making architecture. By treating informative trajectories as certifiable decisions and committing only those that satisfy safety and cost constraints, the framework enables reliable execution while systematically reducing parametric uncertainty. Case studies demonstrate improved performance over conservative baselines, achieving lower mission cost and tighter uncertainty bounds. Future work will focus on incorporating safety directly into candidate generation and developing principled methods to quantify the impact of uncertainty reduction on future mission cost.

\nocite{*}
\bibliographystyle{IEEEtran}
\bibliography{main.bib}

\appendix

\subsection{Interpretation of the Predicted Width}
\label{appendix:uncertainty_reduction_anaylis}
This subsection interprets the predicted uncertainty width by relating it to the width obtained from realized measurements under idealized execution (no tracking error), and is not used by the algorithm or the main theoretical guarantees.
\begin{assumption}
\label{assump:exec=plan}
On $[t_i,t_f]$, the executed (closed-loop) state--input trajectory coincides with the planned candidate $p=(p_x,p_u)$. 
The stacked regressor induced by the executed samples is
$
A_{\mathrm{act}} = [\,\Phi(x(t_j),u(t_j))\,]_{j=1}^{N_j}\in\R^{M\times p},
$
with $M=N_j c$. 
Because execution matches the plan, $A_{\mathrm{act}}=A$ by~\eqref{eq:Astack}. 
Measurements satisfy $z_j=\Phi_j\theta^\star+w_j$ with $\|w_j\|_\infty\le \overline w$ for all $j=1,\dots,N_j$.
\end{assumption}
Given \cref{assump:exec=plan}, consider the parameter set consistent with the realized measurements,
\eqn{
\Theta_{\mathrm{act}} \;:=\;
\Big\{\theta\in\Theta:\ \|z - A_{\mathrm{act}}\theta\|_\infty \le \overline w\Big\},
\quad
z = A_{\mathrm{act}}\theta^\star + w,
}
where $A_{\mathrm{act}}$ stacks the regressors induced by the executed samples.
\begin{prop}
\label{prop:pred_upper_bound}
Under~\cref{assump:exec=plan}, for all $d \in \Dcal$,
\eqn{
w_d\big(\Theta_{\mathrm{act}}\big)
\;\le\;
w_d\big(\Theta_{N_j}(\theta^\star)\big)
\;\le\;
\min\Big( w_d(\Theta),\, 2\,h_{\Ecal_\theta}(d) \Big),
}
where $h_{\Ecal_\theta}(d)$ admits the exact dual form~\eqref{eq:prop_des_form}.
\end{prop}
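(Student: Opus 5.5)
The plan is to prove the two inequalities separately. The right-hand inequality has essentially been established already in the discussion preceding the proposition. The left-hand inequality reduces to a set inclusion $\Theta_{\mathrm{act}}\subseteq\Theta_{N_j}(\theta^\star)$ combined with monotonicity of the directional width under inclusion.

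\textbf{Step 1 (inclusion).} Take any $\theta\in\Theta_{\mathrm{act}}$ and set $e_\theta=\theta-\theta^\star$. By \cref{assump:exec=plan}, $A_{\mathrm{act}}=A$ and $z=A\theta^\star+w$ with $\|w\|_\infty\le\overline w$ (stacking the per-sample bounds). Then
\[
\|z-A\theta\|_\infty=\|w-Ae_\theta\|_\infty\le\overline w .
\]
By the triangle inequality,
\[
\|Ae_\theta\|_\infty\le\|Ae_\theta-w\|_\infty+\|w\|_\infty\le 2\overline w .
\]
This is exactly the ($\Rightarrow$) direction of \cref{lemma:2w} applied blockwise. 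Hence $e_\theta\in\Ecal_\theta$ by \cref{def:Ecal}, so $\theta\in\theta^\star+\Ecal_\theta$. Since also $\theta\in\Theta$, we get $\theta\in\Theta\cap(\theta^\star+\Ecal_\theta)=\Theta_{N_j}(\theta^\star)$.

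\textbf{Step 2 (monotonicity of width).} For sets $S_1\subseteq S_2$ and any direction $d$, we have $\sup_{S_1}d^\top\theta\le\sup_{S_2}d^\top\theta$ and $\inf_{S_1}d^\top\theta\ge\inf_{S_2}d^\top\theta$, so $w_d(S_1)\le w_d(S_2)$. Applied to Step 1, this gives $w_d(\Theta_{\mathrm{act}})\le w_d(\Theta_{N_j}(\theta^\star))$.

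\textbf{Step 3 (right inequality).} Apply the same monotonicity to $\Theta_{N_j}(\theta^\star)\subseteq\Theta$ and to $\Theta_{N_j}(\theta^\star)\subseteq\theta^\star+\Ecal_\theta$. Translation invariance of $w_d$ then yields $w_d(\Theta_{N_j}(\theta^\star))\le\min(w_d(\Theta),w_d(\Ecal_\theta))$. By \cref{lem:width-2h}, $w_d(\Ecal_\theta)=2h_{\Ecal_\theta}(d)$. The exact dual form \eqref{eq:prop_des_form} follows from LP strong duality for \eqref{eq:lp_eq}. The primal is feasible ($e_\theta=0$), so if it is bounded the optimal values coincide. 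If $d\notin\mathrm{range}(A^\top)$, the primal is unbounded and the dual is infeasible, so both sides equal $+\infty$ and the bound is trivially $w_d(\Theta)$.

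\textbf{Main obstacle.} The only delicate points are degenerate cases: an empty $\Theta_{\mathrm{act}}$ (excluded here since $\theta^\star\in\Theta_{\mathrm{act}}$) and unbounded widths. I will handle both by the conventions $\sup\emptyset=-\infty$ and $\min(a,+\infty)=a$, and by noting that the reduction of the dual via $\lambda_d=\lambda_2-\lambda_1$ is valid because at the optimum one may take $\lambda_1,\lambda_2$ with disjoint supports.
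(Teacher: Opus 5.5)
Your proposal is correct and follows the paper's proof essentially step for step. Both arguments establish the inclusion $\Theta_{\mathrm{act}}\subseteq\Theta_{N_j}(\theta^\star)$ via the triangle inequality, apply monotonicity of the width under inclusion, and then use translation invariance, Lemma~\ref{lem:width-2h}, and LP strong duality for the right-hand bound; your explicit handling of the degenerate cases (the unbounded primal when $d\notin\mathrm{range}(A^\top)$, and disjoint supports in the reduction to $\|\lambda_d\|_1$) fills in details the paper leaves implicit.
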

\begin{proof}
Let $e_\theta = \theta - \theta^\star$. From the construction of $\Theta_{\mathrm{act}}$ and
$z = A_{\mathrm{act}}\theta^\star + w$, we have
\begin{subequations}
\eqn{
\theta \in \Theta_{\mathrm{act}}
&\;\Longleftrightarrow\;
\|z - A_{\mathrm{act}}\theta\|_\infty \le \overline w, \\
&\;\Longleftrightarrow\;
\|A_{\mathrm{act}}(\theta^\star - \theta) + w\|_\infty \le \overline w .
}
\end{subequations}
Under~\cref{assump:exec=plan}, $A_{\mathrm{act}} = A$. Hence, for each row $a_j^\top$ of $A$,
\[
|a_j^\top e_\theta - w_j| \le \overline w
\;\Rightarrow\;
|a_j^\top e_\theta| \le \overline w + |w_j|
\le 2\overline w,
\quad \forall j .
\]
Therefore $\|A e_\theta\|_\infty \le 2\overline w$ (~\cref{lemma:2w}), which implies
$e_\theta \in \Ecal_\theta$ and
\eqn{
\Theta_{\mathrm{act}}
\;\subseteq\;
\Theta \cap \big(\theta^\star + \Ecal_\theta\big)
\;=\;
\Theta_{N_j}(\theta^\star).
}
Since the width is monotone under set inclusion,
\[
w_d\big(\Theta_{\mathrm{act}}\big)
\;\le\;
w_d\big(\Theta_{N_j}(\theta^\star)\big).
\]
Finally, translation invariance of the width and~\cref{lem:width-2h} yield
\[
w_d\big(\Theta_{N_j}(\theta^\star)\big)
\;\le\;
\min\big(w_d(\Theta),\, w_d(\Ecal_\theta)\big),
\]
with $w_d(\Ecal_\theta) = 2\,h_{\Ecal_\theta}(d)$. The dual representation
\eqref{eq:prop_des_form} follows from strong LP duality.
\end{proof}

\subsection{Useful Properties}
\begin{corollary}
This directional width is translation invariant: $w_d(\theta_0 + \Ccal) = w_d(\Ccal)$. 
\begin{align*}
w_d(\Ccal+a)
&= \sup_{x\in \Ccal} d^\top (x+a) \;-\; \inf_{x\in \Ccal} d^\top (x+a) \\
&= \big(\sup_{x\in \Ccal} d^\top x + d^\top a\big)
   \;-\; \big(\inf_{x\in \Ccal} d^\top x + d^\top a\big) \\
&= \sup_{x\in \Ccal} d^\top x \;-\; \inf_{x\in \Ccal} d^\top x \\
&= w_d(\Ccal).
\end{align*} 
\end{corollary}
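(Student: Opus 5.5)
The statement to prove is the final Corollary: translation invariance of the directional width, $w_d(\mathcal{C}+a)=w_d(\mathcal{C})$. I would prove it directly from \textit{Def}.~\ref{def:width}, taking $\mathcal{C}\subset\mathbb{R}^p$ nonempty and bounded so that both the supremum and the infimum are finite. (This is the situation for the compact sets $\Theta$, $\Theta_k$ and for the sets where the property is used, such as $\mathcal{E}_\theta$ intersected with $\Theta$.)

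\textbf{Step 1: reparametrize the translated set.} Write $\mathcal{C}+a=\{x+a : x\in\mathcal{C}\}$. The map $x\mapsto x+a$ is a bijection from $\mathcal{C}$ onto $\mathcal{C}+a$, so any supremum or infimum over $\mathcal{C}+a$ can be rewritten as one over $x\in\mathcal{C}$ of the function $d^\top(x+a)$.

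\textbf{Step 2: pull out the constant.} Expand $d^\top(x+a)=d^\top x+d^\top a$. The term $d^\top a$ is a real constant independent of $x$. Therefore
\[
\sup_{x\in\mathcal{C}}\bigl(d^\top x+d^\top a\bigr)=\sup_{x\in\mathcal{C}} d^\top x+d^\top a,
\]
and the same identity holds for the infimum. Each direction is elementary: an upper bound for one set of values, shifted by $d^\top a$, is an upper bound for the other.

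\textbf{Step 3: cancel.} Subtracting the infimum expression from the supremum expression, the two copies of $d^\top a$ cancel, leaving $w_d(\mathcal{C})$.

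The only point needing care is finiteness. Boundedness of $\mathcal{C}$ is what lets the $d^\top a$ terms cancel legitimately rather than produce $\infty-\infty$. If unbounded sets were allowed, I would adopt the convention that both widths equal $+\infty$, in which case the identity still holds. There is no substantive obstacle beyond this.
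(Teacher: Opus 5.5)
Your proof is correct and follows the paper's own computation: reindex the supremum and infimum over $\Ccal+a$ by $x\in\Ccal$, pull out the constant $d^\top a$, and cancel. One refinement is needed to your finiteness caveat. The cancellation only requires that $d^\top x$ be bounded on $\Ccal$, not that $\Ccal$ itself be bounded. This matters because the paper applies the identity to $\theta^\star+\Ecal_\theta$, which is unbounded whenever $A$ lacks full column rank but can still have finite width along $d$, so your fallback convention that unbounded sets have width $+\infty$ would give the wrong value there. The same argument already handles every nonempty $\Ccal$: when $\sup_{x\in\Ccal} d^\top x=+\infty$ or $\inf_{x\in\Ccal} d^\top x=-\infty$, both widths equal $+\infty$ directly from the definition.
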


\subsection{Autonomous Car Racing Dynamics \& LIP}
\label{sec:appex_car}

\subsubsection{Dynamic Bicycle Model}
\label{sec:appex_car_dyn}
The vehicle is modeled using a planar dynamic bicycle model with uncertain tire-road friction. The state is
\eqn{
x =
\begin{bmatrix}
p_x & p_y & \psi & v_x & v_y & \omega & \delta
\end{bmatrix}^{\!\top} \in \R^7,
}
where $(p_x,p_y)$ is the global position, $\psi$ the yaw angle, $v_x$ and $v_y$ the body-frame longitudinal and lateral velocities, $\omega$ the yaw rate, and $\delta$ the steering angle. The control input is
\eqn{
u =
\begin{bmatrix}
F_d & F_b & \dot\delta_{\mathrm{cmd}}
\end{bmatrix}^{\!\top} \in \R^3,
}
where $F_d$ and $F_b$ denote drive and braking forces and $\dot\delta_{\mathrm{cmd}}$ is the commanded steering rate.

The global kinematics are
\begin{subequations}
\eqn{
\dot p_x &= v_x \cos\psi - v_y \sin\psi, \\
\dot p_y &= v_x \sin\psi + v_y \cos\psi, \\
\dot \psi &= \omega .
}
\end{subequations}

The tire slip angles are
\eqn{
\alpha_f =
\delta - \tan^{-1}\!\left(
\frac{l_f\omega + v_y}{v_x+\epsilon}
\right),
\quad
\alpha_r =
\tan^{-1}\!\left(
\frac{l_r\omega - v_y}{v_x+\epsilon}
\right),
}
where $l_f$ and $l_r$ denote the distances from the center of mass to the front and rear axles. The normal loads are
\eqn{
F_{z,f} = \frac{m g l_r}{2l},
\qquad
F_{z,r} = \frac{m g l_f}{2l},
}
where $l=l_f+l_r$. Lateral tire forces are modeled using a simplified Pacejka formulation
\begin{subequations}
\eqn{
F_{y,f} =
\mu F_{z,f}
\sin\!\Big(
C_f \tan^{-1}(B_f \alpha_f)
\Big),
}
\eqn{
F_{y,r} =
\mu F_{z,r}
\sin\!\Big(
C_r \tan^{-1}(B_r \alpha_r)
\Big),
}
\end{subequations}
where $\mu$ is the tire friction coefficient. In this study, $\mu$ is treated as an unknown but bounded parameter to be learned online.

The longitudinal tire forces are
\begin{subequations}
\eqn{
F_{x,f}
=
\frac{1}{2}k_d F_d
+
\frac{1}{2}k_b F_b
-
\frac{1}{2} f_r m g \frac{l_r}{l}, \\
F_{x,r}
=
\frac{1}{2}(1-k_d) F_d
+
\frac{1}{2}(1-k_b) F_b
-
\frac{1}{2} f_r m g \frac{l_f}{l},
}
\end{subequations}
where $k_d$ and $k_b$ denote the front-axle drive and braking distributions, respectively, and $f_r$ is the rolling resistance coefficient. The body-frame dynamics are
\begin{subequations}
\eqn{
\dot v_x
&=
\frac{1}{m}
\Big(
2F_{x,r}
+
2F_{x,f}\cos\delta
-
2F_{y,f}\sin\delta
\Big)
-
\frac{1}{2}\rho A C_d v_x^2
+
\omega v_y, \\
\dot v_y
&=
\frac{1}{m}
\Big(
2F_{y,r}
+
2F_{y,f}\cos\delta
+
2F_{x,f}\sin\delta
\Big)
-
\omega v_x, \\
\dot \omega
&=
\frac{1}{J_z}
\Big(
-2F_{y,r}l_r
+
\big(
2F_{y,f}\cos\delta
+
2F_{x,f}\sin\delta
\big)l_f
\Big), \\
\dot \delta &= \dot\delta_{\mathrm{cmd}} .
}
\end{subequations}

\subsubsection{Linear in Parameter Form}
\label{sec:appex_car_LIP}
For the uncertainty reduction module, the model is written in linear-in-parameter form as
\eqn{
\dot x = f_0(x) + g_0(x)u + \Phi(x)\mu + w(t),
}
where $\Phi(x)$ is the regressor associated with the friction parameter and $w(t)$ is a bounded disturbance. Defining the lateral tire forces without the friction coefficient as
\begin{subequations}
\eqn{
\bar F_{y,f}(x)
=
F_{z,f}
\sin\!\Big(
C_f \tan^{-1}(B_f \alpha_f(x))
\Big), \\
\bar F_{y,r}(x)
=
F_{z,r}
\sin\!\Big(
C_r \tan^{-1}(B_r \alpha_r(x))
\Big),
}
\end{subequations}
the regressor takes the form
\eqn{
\Phi(x)
=
\begin{bmatrix}
0\\
0\\
0\\
-\frac{2}{m}\sin\delta\,\bar F_{y,f}(x)\\
\frac{2}{m}\big(\bar F_{y,r}(x)+\cos\delta\,\bar F_{y,f}(x)\big)\\
\frac{1}{J_z}\big(-2l_r\,\bar F_{y,r}(x)+2l_f\cos\delta\,\bar F_{y,f}(x)\big)\\
0
\end{bmatrix}.
}
where $\mu$ is the tire friction coefficient. In this study $\mu$ is treated as an unknown but bounded parameter to be learned online. The resulting dynamics can be written in linear-in-parameter form
\eqn{
\dot x = f_0(x) + g_0(x)u + \Phi(x)\mu + w(t),
}
where $\Phi(x)$ is the regressor associated with the friction parameter and $w(t)$ is a bounded disturbance.

\subsubsection{Nominal MPC Planner}
\label{sec:nominal_car_policy}
The fallback policy corresponds to a conservative controller that follows the track centerline using a pure pursuit strategy with reduced speed, providing a safe fallback behavior that maintains large safety margins with respect to track boundaries.

The nominal MPC solves
\eqn{
\begin{aligned}
\min_{x_{0:N},u_{0:N-1}}\quad
&\sum_{k=0}^{N-1}\ell(x_k,u_k,\Delta u_k)
+ \ell_N(x_N),
\end{aligned}
}
with stage cost
\eqn{
\begin{aligned}
\ell(x_k,u_k,\Delta u_k) =\;
q_t t_k^2
&+ q_{e_\psi} e_{\psi,k}^2  \\
&+ q_v (v_{x,k}-v_{\mathrm{ref},k})^2  \\
&+ q_{v_y} v_{y,k}^2
+ q_\omega \omega_k^2 \\
&+ u_k^\top R u_k
+ \Delta u_k^\top R_\Delta \Delta u_k ,
\end{aligned}
}
where $\ell_N(x_N)$ is the terminal cost.

Informative candidate trajectories are generated using a nonlinear MPPI planner. The MPPI objective augments the racing cost with an information-seeking term \eqref{eqn:gatekeeper_info_obj}.
The resulting candidate trajectories are passed to the gatekeeper safety verification module, which evaluates safety through forward simulation under sampled uncertainty realizations and commits the trajectory that satisfies safety and budget constraints while achieving the highest predicted uncertainty reduction.

\subsubsection{\textbf{Fallback Policy}}

A conservative safety policy designed offline. It is implemented as a low-speed pure pursuit controller tracking the centerline with large safety margins across the admissible range of $\mu$. Candidate trajectories are formed by concatenating a policy segment with the fallback policy. Conservative candidates use nominal segments, while informative candidates use informative segments (\cref{fig:racing_Example}).

\subsubsection{\textbf{Nominal Mission Policy}}
A performance-oriented controller that tracks a racing line to minimize lap time. It is generated using a linearized MPC formulation (Appendix~\ref{sec:nominal_car_policy}) and does not account for uncertainty.
\subsubsection{\textbf{Informative Policy}}
A controller that excites the dynamics to reduce uncertainty in $\mu$. It may deviate from nominal behavior to improve parameter estimation. Trajectories are obtained from an excitation-driven optimization problem (Appendix~\ref{sec:appex_car}).






\end{document}